\newtheorem{theorem}{Theorem}
\newtheorem{lemma}{Lemma}
\newtheorem{definition}{Definition}
\newtheorem*{proofnote}{Note}
\begin{document}


    \twocolumn[

    \icmltitle{Equivalence of Context and Parameter Updates in Modern Transformer Blocks}

    \begin{icmlauthorlist}
    \icmlauthor{Adrian Goldwaser}{cambridge,google}
    \icmlauthor{Michael Munn}{google}
    \icmlauthor{Javier Gonzalvo}{google}
    \icmlauthor{Benoit Dherin}{google}
    \end{icmlauthorlist}

    \icmlaffiliation{cambridge}{University of Cambridge, UK}
    \icmlaffiliation{google}{Google Research}

    \icmlcorrespondingauthor{Adrian Goldwaser}{goldwaser@google.com}
    \icmlcorrespondingauthor{Michael Munn}{munn@google.com}
    \icmlcorrespondingauthor{Benoit Dherin}{dherin@google.com}
    \icmlkeywords{Machine Learning, ICML}

    \vskip 0.3in
    ]

    \printAffiliationsAndNotice{}

\begin{abstract}
Recent research has established that the impact of context in a vanilla transformer can be represented implicitly by forming a token-dependent, rank-1 patch to its MLP weights. This work extends that foundational theory to the diverse architectures of modern Large Language Models. We first demonstrate a precise, analytical solution for a Gemma-style transformer block, proving that the entire effect of a context can be perfectly mapped to rank-1 patches on its MLP weight matrices and a patch to the RMSNorm scale. We then generalize this result, providing a constructive proof and algorithm for multi-layer models. To unify these findings, we introduce a general framework centered on two core properties: \emph{input controllability} and \emph{output controllability}. We prove that a perfect implicit weight patch is possible for any MLP block where the inner function is input-controllable and the outer function is output-controllable. This provides a simpler and more powerful lens for understanding how transformer models transmute prompts into effective weights. This setup generalizes to a wide range of modern LLM architectures including gating, pre-/post-norm, mixture of experts and sequential/parallel transformer blocks.
\end{abstract}

\section{Introduction}
Large Language Models (LLMs) exhibit a remarkable, almost paradoxical, capability: after their large-scale training is complete, they appear to learn new tasks and adapt their behavior ``on the fly'' based purely on the prompt they are given. This powerful emergent phenomenon, known as in-context learning (ICL) \citep{brown2020language}, is a central mystery. How does a static, pre-trained network effectively ``reprogram'' itself at inference time? One emerging perspective is that the model doesn't just process the context; it absorbs it. Recent research has begun to formalize this idea, showing that the prompt can be mathematically re-interpreted as a set of implicit, task-specific modifications—or patches—to the model's own weights \citep{dai2023can, dherin2025learning}.

This line of inquiry was given a precise, mechanistic foundation by \citet{dherin2025learning}, who proved that for a single, vanilla transformer block~\citep{Vaswani2017attention}, the computational effect of a context is mathematically equivalent to a specific, rank-1 patch to the MLP weight matrix and bias vector.

However, this proof was developed for a vanilla transformer block, leaving open the question of its applicability to the complex and varied architectures used in modern models. These models employ different components, such as gated MLPs (e.g., SwiGLU, GeGLU)~\citep{Shazeer2020glu} which utilize activation functions like GELU~\citep{hendrycks2016gelu}, RMSNorm~\citep{Zhang19rmsnorm}, and Pre-Normalization schemes~\citep{Xiong2020prenorm} and do not use biases, which were required to absorb the impact of the residual connection. The compatibility of these modern architectures with the implicit weight patch mechanism has not been formally analyzed.


This paper bridges that gap and provides a comprehensive, general theory for implicit weight updates in modern transformers. Our main contributions are:
\begin{itemize}
    \item We provide a constructive proof for a modern, Gemma-style architecture, deriving the exact parameter patches required to perfectly absorb context into the MLP and normalization layers (\textbf{\cref{thm:gemma_block}} in \cref{sec:gemma-block}).
    \item We extend this finding inductively to deep, multi-layer models, proving that a perfect patch exists for the entire network (\textbf{\cref{thm:multilayer}} in \cref{sec:multilayer}) and provide a practical algorithm for its computation (\cref{alg:multilayer}).
    \item We introduce a general framework built on two core properties, \emph{input controllability} (\cref{dfn:input-controllability}) and \emph{output controllability} (\cref{dfn:output-controllability}). We use this framework to prove a \textbf{unified theorem (\cref{thm:unified_residual})} that generalizes our findings to a wide range of architectures including Gemma, Llama, Falcon, Mistral, and MoE models (\cref{sec:general}).
    \item We experimentally validate our theory on a Gemma 3 model for both text and image contexts and a Falcon model for text contexts, showing that the patched model without context achieves near-perfect logit matching and identical token generation to the original model with context (\cref{sec:experiments}).
\end{itemize}

\paragraph{Conflict of Interest Disclosure.}
The authors M.M., J.G., and B.D. are employed by Google,
which leads the development of Gemma, one of the models
evaluated in this paper. A.G. is affiliated with both the
University of Cambridge and Google Research.

\section{Background}

The mechanism driving the in-context learning (ICL) phenomenon observed by \citet{brown2020language} remains a central research question. Several complementary theories have emerged to explain how transformers adapt at inference time. Some frame ICL as a high-level form of implicit Bayesian inference, where the model uses the prompt to update an internal belief state \citep{xie2022explanation}. Others have proposed that the transformer's forward pass is mathematically analogous to an optimization process, effectively performing steps of gradient descent on an implicit objective defined by the context \citep{vonoswald2023transformers}. At a more mechanistic level, ICL capabilities have also been linked to the emergence of specific ``induction heads'' during training, which allow the model to perform pattern-matching and copying \citep{olsson2022incontext}.

Our analysis builds upon the more recent and granular work of \citet{dherin2025learning}, which formalizes how a transformer block processes context. Their framework centers on the \emph{contextual block}: a contextual layer (like self-attention or a state space model~\citep{Gu2022ssm, Gu2023mamba}) followed by a neural network, typically an MLP. The key insight is that the influence of the context can be viewed as an implicit patch to the weights of the MLP.

In a vanilla transformer block, the output of the attention layer is added to the input via a residual connection. We will refer to this resulting vector $\mathbf{v}$ without a specific context, and $\mathbf{v}_C$ with it. The change induced by the context is thus $\Delta\mathbf{v} = \mathbf{v}_C - \mathbf{v}$. This vector $\mathbf{v}_C$ is then passed through an MLP, which in simple models concludes with a final bias addition, for instance, $f(\mathbf{z}) = W_2 \cdot \text{act}(W_1\mathbf{z} + \mathbf{b}_1) + \mathbf{b}_2$~\citep{Vaswani2017attention}. \citet{dherin2025learning} proved that the effect of processing $\mathbf{v}_C$ is mathematically identical to processing the original vector $\mathbf{v}$ with a modified MLP. In this modified MLP block, the entire contextual difference is absorbed by a rank-1 patch to the input weight matrix, $W_1$ and a patch to the bias ($\Delta \mathbf{b}_2 = \Delta\mathbf{v}$). This elegant solution, however, critically depends on the existence of the $\mathbf{b}_2$ term. Modern high-performance models like Gemma~\citep{kamath2025gemma3} and Llama~\citep{Touvron2023llama} have eliminated these biases, leaving a gap in the theory. This theory also needs to be extended to multi-layer networks, and blocks with normalization layers and gating. Our work fills in these missing pieces to extend this setup to modern architectures.

\citet{innocenti2025simple} independently explored generalizations for Pre-LayerNorm and arbitrary sequence/block positions, though their analysis is restricted to standard residual blocks containing biases and acknowledges a lack of exact correspondence to practical model architectures. In contrast, we introduce a unified controllability framework that encompasses the bias-free, gated architectures and RMS normalization used in state-of-the-art models. Furthermore, while previous work discusses iterative application to arbitrary blocks, we provide a formal inductive proof and algorithm ensuring mathematical equivalence across all layers of deep networks during autoregressive generation, specifically addressing the numerical stability required for real-world deployment.

Concurrently, other research has addressed the limitation that these implicit patches are \emph{token-dependent} and must be recomputed at each generation step \citep{mazzawi2025transmuting}. \citet{mazzawi2025transmuting} proposes a method to aggregate these transient patches into a reusable, token-independent ``thought patch.'' Our work is complementary to this direction. We do not focus on the re-usability of the patches, but on the more fundamental question of their existence and form in modern architectures. We demonstrate that the implicit weight patch is a general principle, not an artifact of vanilla transformer models, and begin by providing a constructive proof for a Gemma block.

\section{Results: Implicit Updates in Modern Transformers}

We first prove that a perfect implicit weight patch exists for a modern transformer block, specifically one modeled after the Gemma architecture. We then extend this finding to multi-layer models.

\subsection{The Gemma Block}
\label{sec:gemma-block}

    \begin{figure}[ht!]
    \centering
\begin{tikzpicture}[
    scale=0.7,
    node distance=0.7cm and 0.6cm,
    block/.style = {
        rectangle, 
        rounded corners, 
        draw=black, 
        fill=gray!20,
        minimum height=2.5em, 
        minimum width=5em, 
        text centered
    },
    oper/.style  = {
        circle, 
        draw=black, 
        font=\Large, 
        inner sep=0pt, 
        minimum size=7mm
    },
    label_f/.style = {text=red, font=\Large\bfseries},
    label_m/.style = {text=green!50!black, font=\bfseries},
    label_w/.style = {text=blue},
    conn/.style = {-{Stealth[length=3mm]}} 
]
    \node[block] (norm1) {$\text{RMSNorm}_1$};

    \node[block, label_w, above left=of norm1] (w_gate) {$W_{\text{gate}}$};
    \node[block, label_w, above=of norm1] (w_up) {$W_{\text{up}}$};

    \node[block, above=of w_gate] (gelu) {GeLU};

    \node[oper, at=(gelu -| w_up)] (mult1) {$\otimes$};

    \node[block, above=of mult1] (w_down) {$W_{\text{down}}$};

    \node[block, above=of w_down] (norm2) {$\text{RMSNorm}_2'$};

    \node[oper, above=of norm2] (mult2) {$\otimes$};

    \node[block, label_m, left=of mult2] (m) {m};

    \node[oper, above=of mult2] (add) {$\oplus$};

    \coordinate[below=1cm of norm1] (input);
    \coordinate[above=1cm of add] (output);
    \coordinate[below=0.5cm of norm1] (branch_point);
    
    \coordinate (res_corner) at ([xshift=1.2cm]w_up.east |- branch_point);

    \draw[conn] (input) -- (branch_point) -- (norm1);

    \coordinate (split) at ($(norm1.north)+(0,0.35)$);
    \draw (norm1) -- (split);
    \draw[conn] (split) -| (w_gate.south);
    \draw[conn] (split) -| (w_up.south);

    \draw[conn] (w_gate) -- (gelu);
    \draw[conn] (gelu) -- (mult1);
    \draw[conn] (w_up) -- (mult1);
    \draw[conn] (mult1) -- (w_down);
    \draw[conn] (w_down) -- (norm2);
    \draw[conn] (norm2) -- (mult2);
    \draw[conn] (m) -- (mult2);
    \draw[conn] (mult2) -- (add);
    \draw[conn] (add) -- (output);

    \draw[conn] (branch_point) -- (res_corner) |- (add);

    \node[
        draw=red, 
        very thick, 
        rounded corners, 
        inner sep=6pt,
        fit=(gelu) (w_down) (mult1) (norm2)
    ] (f_box) {}; 
    
    \node[label_f, anchor=north west, xshift=2pt, yshift=-2pt] 
        at (f_box.north west) {f};
        
    \begin{scope}[on background layer]
        \node[
            fill=gray!10, 
            draw=gray!70, 
            rounded corners,
            inner sep=10pt, 
            fit=(norm1) (w_up) (w_gate) (norm2) (m) (add) (mult2) (res_corner) (f_box) 
        ] (background_box) {};
    \end{scope}

\end{tikzpicture}
\caption{Gemma MLP block diagram. $\mathbf{m}$ is part of the second RMS normalization ($\text{RMSNorm}_2$) but stated separately to match the equations. $\otimes$ denotes elementwise multiplication of vectors.}
\label{fig:gemma_block}
    \end{figure}

A standard decoder-only transformer block, such as in Gemma~\citep{kamath2025gemma3}, utilizes a pre/post-normalization architecture for its MLP sub-layer as shown in \cref{fig:gemma_block}. We analyze the processing of a single token representation, $\mathbf{x}$, given a preceding context $C$. The context $C$ consists of all the preceding tokens in the sequence.

The process begins with the attention block. We define the function $A(C, \mathbf{x}) = \mathbf{x} + \text{Attn}(C, \mathbf{x})$ to represent the entire attention sub-layer's operation: it computes the attention output (for an input $\mathbf{x}$ with context $C$) and adds it back to the original input $\mathbf{x}$ via the residual connection.

Our analysis centers on comparing the block's computation with this full context $C$ against its computation with a \emph{reduced context}, $C \setminus Y$, where $Y$ represents the ``extra'' contextual information we wish to absorb (e.g., the in-context learning examples). We will focus on the case where the reduced context is empty, i.e., $Y=C$, which means $C \setminus Y = \emptyset$.

We first define the forward pass using the full context $C$. Let the intermediate result from the attention sub-layer, which serves as the input to the MLP sub-layer, be denoted by $\mathbf{v}_C$:
$$
\mathbf{v}_C = A(C, \mathbf{x})
$$
This vector is then normalized using RMSNorm~\citep{Zhang19rmsnorm} before being processed. Let the normalized vector be $\mathbf{z}_C$:
$$
\mathbf{z}_C = N_{\text{RMS}}(\mathbf{v}_C)
$$
The MLP sub-layer's output is scaled and added back to $\mathbf{v}_C$ in the MLP's residual connection. The complete output of the transformer block, $T(C, \mathbf{x})$, is given by:
\begin{equation}
T(C, \mathbf{x}) = \mathbf{v}_C + \textcolor{green!50!black}{\mathbf{m}} \odot \textcolor{red}{f}(\textcolor{blue}{W_{\text{gate}}}\mathbf{z}_C, \textcolor{blue}{W_{\text{up}}}\mathbf{z}_C)
\end{equation}
where $\textcolor{red}{f}$ is the MLP's core computation (e.g., GELU activation and element-wise multiplication~\citep{Shazeer2020glu}), for Gemma, $\textcolor{red}{f}(\mathbf{a}, \mathbf{b})=\text{RMSNorm}'\left(W_\text{down} \left( \text{GeLU}(\mathbf{a}) \odot \mathbf{b}\right) \right)$. $\textcolor{blue}{W_{\text{gate}}}$ and $\textcolor{blue}{W_{\text{up}}}$ are trainable weight matrices, and $\textcolor{green!50!black}{\mathbf{m}}$ is a trainable output scaling vector included in the RMS normalization. We use $\text{RMSNorm}'$ above and in \cref{fig:gemma_block} to mean RMSNorm without applying the scaling vector.

\begin{theorem}[Single Block Equivalence]
\label{thm:gemma_block}
Let $\mathbf{v}_C = A(C, \mathbf{x})$ and $\mathbf{v} = A(C \setminus Y, \mathbf{x})$ be the intermediate outputs from the attention sub-layer with the full context and a reduced context, respectively. Let their normalized versions be $\mathbf{z}_C = N_{\text{RMS}}(\mathbf{v}_C)$ and $\mathbf{z} = N_{\text{RMS}}(\mathbf{v})$.

The output of the transformer block with full context, $T(C, \mathbf{x})$, can be perfectly replicated using the reduced context in a modified transformer block, $T'(C \setminus Y, \mathbf{x})$, if the MLP parameters ($W_{\text{gate}}, W_{\text{up}}, \mathbf{m}$) are adjusted by the following updates and $f(\textcolor{blue}{W_{\text{gate}}}\mathbf{z}_C, \textcolor{blue}{W_{\text{up}}}\mathbf{z}_C)_i\neq0$ for all $i$:
\begin{align}
\textcolor{blue}{\Delta W_{\text{gate}}} &= \frac{\textcolor{blue}{W_{\text{gate}}}\left(\mathbf{z}_C - \mathbf{z}\right)\mathbf{z}^{\top}}{\|\mathbf{z}\|^{2}} \label{eq:delta_w_gate} \\
\textcolor{blue}{\Delta W_{\text{up}}} &= \frac{\textcolor{blue}{W_{\text{up}}}\left(\mathbf{z}_C - \mathbf{z}\right)\mathbf{z}^{\top}}{\|\mathbf{z}\|^{2}} \label{eq:delta_w_up} \\
\textcolor{green!50!black}{\Delta \mathbf{m}} &= \left(\mathbf{v}_C - \mathbf{v}\right)\oslash \left(\textcolor{red}{f}(\textcolor{blue}{W_{\text{gate}}}\mathbf{z}_C, \textcolor{blue}{W_{\text{up}}}\mathbf{z}_C)\right)\label{eq:delta_m}
\end{align}
where the division $\oslash$ in \eqref{eq:delta_m} is performed element-wise.
\end{theorem}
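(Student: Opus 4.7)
The plan is to verify the equivalence in two stages, mirroring the decomposition of the block into its inner MLP computation and its outer residual combination. The crucial observation is that the rank-1 patches $\Delta W_{\text{gate}}$ and $\Delta W_{\text{up}}$ are engineered so that, when the patched matrices act on the reduced-context normalized vector $\mathbf{z}$, they reproduce exactly the pre-activations that the original matrices produce on the full-context normalized vector $\mathbf{z}_C$.

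First I would verify the inner equivalence by direct algebra. A short computation gives
$$(W_{\text{gate}} + \Delta W_{\text{gate}})\,\mathbf{z} \;=\; W_{\text{gate}}\mathbf{z} + \frac{W_{\text{gate}}(\mathbf{z}_C - \mathbf{z})\,\mathbf{z}^{\top}\mathbf{z}}{\|\mathbf{z}\|^{2}} \;=\; W_{\text{gate}}\mathbf{z}_C,$$
using only $\mathbf{z}^{\top}\mathbf{z}=\|\mathbf{z}\|^{2}$, and the analogous identity holds for $W_{\text{up}}$. So long as $\mathbf{z}\neq \mathbf{0}$, which is guaranteed by RMSNorm under the usual convention on $\mathbf{v}$, both rank-1 patches are well-defined, and it follows that $\textcolor{red}{f}\bigl((W_{\text{gate}}+\Delta W_{\text{gate}})\mathbf{z},\,(W_{\text{up}}+\Delta W_{\text{up}})\mathbf{z}\bigr)=\textcolor{red}{f}(W_{\text{gate}}\mathbf{z}_C,\,W_{\text{up}}\mathbf{z}_C)$, regardless of the internal form of $\textcolor{red}{f}$. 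This step is essentially the same rank-1 trick used by \citet{dherin2025learning}, now applied independently to each of the two parallel projections of the gated MLP.

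Next I would handle the outer residual mismatch. The original block produces $\mathbf{v}_C + \mathbf{m}\odot \textcolor{red}{f}(\cdot)$, whereas the patched block with reduced context produces $\mathbf{v} + (\mathbf{m}+\Delta\mathbf{m})\odot \textcolor{red}{f}(\cdot)$ with the \emph{same} $\textcolor{red}{f}(\cdot)$ by the previous step. Equating the two and solving coordinate-wise for $\Delta\mathbf{m}$ yields exactly \eqref{eq:delta_m}, which is well-defined precisely under the nonvanishing hypothesis on $\textcolor{red}{f}$.

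The main obstacle, relative to the vanilla transformer setting, is the absence of an output bias that could absorb the residual gap $\mathbf{v}_C-\mathbf{v}$ additively; in the original proof this was achieved by setting $\Delta\mathbf{b}_2 = \Delta\mathbf{v}$. In the Gemma block the only free parameter combining with the MLP output is the multiplicative scale $\mathbf{m}$, so the additive residual shift has to be reproduced through a Hadamard product. This forces the appearance of the element-wise division $\oslash$ and the nonvanishing assumption on $\textcolor{red}{f}$, and explains why $\mathbf{m}$ is separated out from $\text{RMSNorm}_2$ in \cref{fig:gemma_block}: it is exactly the knob that plays the role of the missing bias. Once this ``bias by scale'' substitution is identified, the rest of the argument is routine bookkeeping.
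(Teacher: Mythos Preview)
Your proposal is correct and follows essentially the same two-stage argument as the paper: first verify that the rank-1 patches make $(W_{\text{gate}}+\Delta W_{\text{gate}})\mathbf{z}=W_{\text{gate}}\mathbf{z}_C$ (and likewise for $W_{\text{up}}$) so the inner $f$-output is unchanged, then absorb the residual gap $\mathbf{v}_C-\mathbf{v}$ into $\Delta\mathbf{m}$ via the Hadamard identity. Your added commentary on why $\mathbf{m}$ must play the role of the missing bias is accurate and matches the paper's framing in its subsequent proof note on input/output controllability.
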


\begin{proof}
We show the equivalence by substituting the parameter updates into the definition of the modified transformer block's output, $T'(C \setminus Y, \mathbf{x})$.

First, observe that the update $\Delta W_{\text{gate}}$ is constructed to align the MLP's internal state. The input to the gate projection becomes:
$$
(W_{\text{gate}} + \Delta W_{\text{gate}})\mathbf{z} = W_{\text{gate}}\mathbf{z} + \frac{W_{\text{gate}}(\mathbf{z}_C - \mathbf{z})\mathbf{z}^{\top}\mathbf{z}}{\|\mathbf{z}\|^2} = W_{\text{gate}}\mathbf{z}_C.
$$
An identical result holds for $W_{\text{up}}$. This ensures the internal MLP activation (post normalization but pre-scaling) is unchanged, i.e., $f((W_{\text{gate}} + \Delta W_{\text{gate}})\mathbf{z}, (W_{\text{up}} + \Delta W_{\text{up}})\mathbf{z}) = f(W_{\text{gate}}\mathbf{z}_C, W_{\text{up}}\mathbf{z}_C) \equiv \mathbf{h}_{\text{mlp}}$.

Substituting this result and the definition of $\Delta\mathbf{m}$ into the output equation for $T'$ yields:
\begin{align*}
T'(C \setminus Y, \mathbf{x}) &= \mathbf{v} + (\mathbf{m} + \Delta\mathbf{m}) \odot \mathbf{h}_{\text{mlp}} \\
&= \mathbf{v} + (\mathbf{m} \odot \mathbf{h}_{\text{mlp}}) + (\Delta\mathbf{m} \odot \mathbf{h}_{\text{mlp}}) \\
&= \mathbf{v} + (\mathbf{m} \odot \mathbf{h}_{\text{mlp}}) + \left(\frac{\mathbf{v}_C - \mathbf{v}}{\mathbf{h}_{\text{mlp}}}\right) \odot \mathbf{h}_{\text{mlp}} \\
&= \mathbf{v}_C + \mathbf{m} \odot \mathbf{h}_{\text{mlp}} = T(C, \mathbf{x}). \qedhere
\end{align*}

where the division is taken elementwise.

\end{proof}
\begin{proofnote}
The logic of this proof rests on a separation of concerns. The rank-1 updates to $W_{\text{gate}}$ and $W_{\text{up}}$ are designed to counteract the change in the \emph{normalized} input vector ($\mathbf{z}$ versus $\mathbf{z}_C$), ensuring the internal MLP computation remains identical (input controllability). The subsequent update to the output scale $\mathbf{m}$ then perfectly absorbs the difference from the \emph{pre-normalization} residual path ($\mathbf{v}$ versus $\mathbf{v}_C$, output controllability), guaranteeing the final output is exactly equal, i.e. $T'(\mathbf{x}) = T(C, \mathbf{x})$.
\end{proofnote}
\begin{proofnote}
    This update is mathematically correct, but numerically unstable when used with lower precision datatypes, we show this in \cref{sec:experiments} and address this in \cref{app:numerically-stable-update}.
\end{proofnote}

\subsection{Extension to Multi-Layer Architectures}
\label{sec:multilayer}

    \begin{figure}[!ht]
    \centering

\begin{tikzpicture}[
    scale=0.75,
    inner_block/.style={
        rectangle, 
        draw, 
        rounded corners,
        fill=gray!15, 
        minimum width=1.5cm, 
        minimum height=0.8cm,
        font=\small,
        align=center
    },
    transformer_container_style/.style={
        rounded corners,
        draw,
        dashed,
        fill=none
    },
    arrow/.style={-Stealth, thick},
    container_label/.style={font=\small, anchor=north west},
    column_label/.style={font=\small, align=center, text width=4cm},
    math_label/.style={font=\small},
    red_arrow_label/.style={font=\tiny, midway, right=2pt, text=red},
    blue_arrow_label/.style={font=\tiny, midway, left=2pt, text=blue}
]

\begin{scope}[local bounding box=column1]
    \node (A1_col1) [inner_block] {A$_1$};
    \node (M1_col1) [inner_block, above=0.5cm of A1_col1] {M$'_1$};
    \node (dots_col1) [above=0.7cm of M1_col1.north] {$\vdots$};
    \node (AL_col1) [inner_block, above=0.7cm of dots_col1] {A$_L$};
    \node (ML_col1) [inner_block, above=0.5cm of AL_col1] {M$'_L$};

    \node (h_pad_left1) [left=7mm of A1_col1] {};
    \node (h_pad_right1) [right=7mm of A1_col1] {};
    \node (h_pad_leftL) [left=7mm of AL_col1] {};
    \node (h_pad_rightL) [right=7mm of AL_col1] {};

    \begin{scope}[on background layer]
        \node [transformer_container_style, fit=(A1_col1) (M1_col1) (h_pad_left1) (h_pad_right1), inner sep=3mm] (T1_cont1) {};
        \node [transformer_container_style, fit=(AL_col1) (ML_col1) (h_pad_leftL) (h_pad_rightL), inner sep=3mm] (TL_cont1) {};
    \end{scope}

    \node [container_label] at (T1_cont1.north west) {T$'_1$};
    \node [container_label] at (TL_cont1.north west) {T$'_L$};
    
    \draw [arrow] ($(A1_col1.south) - (0,0.8cm)$) -- (A1_col1.south);
    \draw [arrow] (A1_col1.north) -- node[red_arrow_label] {$A_1(C \setminus Y, \mathbf{x}'_1)$} (M1_col1.south);
    \draw [arrow] (M1_col1.north) -- (dots_col1.south);
    \draw [arrow] (dots_col1.north) -- (AL_col1.south);
    \draw [arrow] (AL_col1.north) -- node[red_arrow_label] {$A_L(C \setminus Y, \mathbf{x}'_L)$} (ML_col1.south);
    \draw [arrow] (ML_col1.north) -- ($(ML_col1.north) + (0,0.8cm)$);

    \node (x_prime_1) [math_label, below=1pt of T1_cont1.south, xshift=0.6cm] {$\mathbf{x}'_1$};
    \node (x_prime_2) [math_label, above=2pt of T1_cont1.north, xshift=0.6cm] {$\mathbf{x}'_2$};
    \node (x_prime_L) [math_label, below=1pt of TL_cont1.south, xshift=0.6cm] {$\mathbf{x}'_L$};
    \node (x_prime_Lp1) [math_label, above=2pt of TL_cont1.north, xshift=0.6cm] {$\mathbf{x}'_{L+1}$};
    
    \node (label1) [column_label, below=1.2cm of T1_cont1] {No context \\ Updated parameters};
\end{scope}

\begin{scope}[local bounding box=column2, xshift=5.5cm] 
    \node (A1_col2) [inner_block] {A$_1$};
    \node (M1_col2) [inner_block, above=0.5cm of A1_col2] {M$_1$};
    \node (dots_col2) [above=0.7cm of M1_col2.north] {$\vdots$};
    \node (AL_col2) [inner_block, above=0.7cm of dots_col2] {A$_L$};
    \node (ML_col2) [inner_block, above=0.5cm of AL_col2] {M$_L$};
    
    \node (h_pad_left1_c2) [left=7mm of A1_col2] {};
    \node (h_pad_right1_c2) [right=7mm of A1_col2] {};
    \node (h_pad_leftL_c2) [left=7mm of AL_col2] {};
    \node (h_pad_rightL_c2) [right=7mm of AL_col2] {};

    \begin{scope}[on background layer]
        \node [transformer_container_style, fit=(A1_col2) (M1_col2) (h_pad_left1_c2) (h_pad_right1_c2), inner sep=3mm] (T1_cont2) {};
        \node [transformer_container_style, fit=(AL_col2) (ML_col2) (h_pad_leftL_c2) (h_pad_rightL_c2), inner sep=3mm] (TL_cont2) {};
    \end{scope}

    \node [container_label] at (T1_cont2.north west) {T$_1$};
    \node [container_label] at (TL_cont2.north west) {T$_L$};
    
    \draw [arrow] ($(A1_col2.south) - (0,0.8cm)$) -- (A1_col2.south);
    \draw [arrow] (A1_col2.north) -- node[blue_arrow_label] {$A_1(C, \mathbf{x}_1)$} (M1_col2.south);
    \draw [arrow] (M1_col2.north) -- (dots_col2.south);
    \draw [arrow] (dots_col2.north) -- (AL_col2.south);
    \draw [arrow] (AL_col2.north) -- node[blue_arrow_label] {$A_L(C, \mathbf{x}_L)$} (ML_col2.south);
    \draw [arrow] (ML_col2.north) -- ($(ML_col2.north) + (0,0.8cm)$);

    \node (x_1) [math_label, below=4pt of T1_cont2.south, xshift=-0.6cm] {$\mathbf{x}_1$};
    \node (x_2) [math_label, above=2pt of T1_cont2.north, xshift=-0.6cm] {$\mathbf{x}_2$};
    \node (x_L) [math_label, below=4pt of TL_cont2.south, xshift=-0.6cm] {$\mathbf{x}_L$};
    \node (x_Lp1) [math_label, above=2pt of TL_cont2.north, xshift=-0.6cm] {$\mathbf{x}_{L+1}$};

    \node [column_label, below=1.2cm of T1_cont2] {With context \\ Original parameters};
\end{scope}

\path (column1.east) -- (column2.west) coordinate[pos=0.5] (mid);

\draw[dotted, gray] (mid |- ML_col1.north) ++(0,1.2cm) -- (mid |- label1.south) ++(0,-0.4cm);

\node at (mid |- x_prime_1.center) {$=$};
\node at (mid |- x_prime_2.center) {$=$};
\node at (mid |- x_prime_L.center) {$=$};
\node at (mid |- x_prime_Lp1.center) {$=$};

\end{tikzpicture}

\caption{Multi-layer equivalence diagram. The left column shows the model with updated parameters and no explicit context. The right column shows the original model with full context. At each layer $i$, we have $\mathbf{x}'_{i+1} = T'_i(C\setminus Y,\mathbf{x}'_{i})=T_i(C,\mathbf{x}_{i})=\mathbf{x}_{i+1}$. The deltas are now $\Delta A_{\mathbf{x}_i}(Y)=\textcolor{blue}{A_i(C,\mathbf{x}_i)} - \textcolor{red}{A_i(C\setminus Y, \mathbf{x}_i)}$ and the equivalent normed version. Note that the $\mathbf{x}'_i$ are different from the intermediate values when simply running a forward pass with the original parameters without context.}
\label{fig:multilayer_equivalence}
    \end{figure}

This result can be extended from a single transformer block to a full, L-layer transformer.

\begin{theorem}[Multi-Layer Equivalence] \label{thm:multilayer}
For an L-layer transformer where each transformer block is structured like the Gemma transformer block in \cref{thm:gemma_block} with parameters $\boldsymbol{\Theta}=\{\boldsymbol{\theta}_1, \dots, \boldsymbol{\theta}_L\}$, a sequence of updated parameters $\boldsymbol{\Theta}'=\{\boldsymbol{\theta}'_1, \dots, \boldsymbol{\theta}'_L\}$ exists such that $T_{\boldsymbol{\Theta}'}(C \setminus Y, \mathbf{x}_1)=T_{\boldsymbol{\Theta}}(C, \mathbf{x}_1)$, where $T_{\boldsymbol{\Theta}}$ is the full transformer with parameters $\boldsymbol{\Theta}$, assuming the conditions for \cref{thm:gemma_block} are satisfied at every layer.
\end{theorem}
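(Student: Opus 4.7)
The plan is to prove the result by induction on the layer index $i$, maintaining the invariant that the input to block $i$ in the patched forward pass (reduced context $C\setminus Y$, parameters $\boldsymbol{\Theta}'$) coincides with the input to block $i$ in the original forward pass (full context $C$, parameters $\boldsymbol{\Theta}$). Writing these two sequences of intermediates as $\mathbf{x}'_i$ and $\mathbf{x}_i$ respectively, as in \cref{fig:multilayer_equivalence}, the goal is to construct the per-layer parameter sets $\boldsymbol{\theta}'_i$ so that $\mathbf{x}'_{i+1}=\mathbf{x}_{i+1}$ for every $i\in\{1,\dots,L\}$. The base case $\mathbf{x}'_1=\mathbf{x}_1=\mathbf{x}_0$ is immediate, since the embedding of the target token does not depend on the context.

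For the inductive step, I would assume $\mathbf{x}'_i=\mathbf{x}_i$ and apply \cref{thm:gemma_block} to block $i$ in a black-box manner. Define the layer-$i$ attention outputs
$$\mathbf{v}_C^{(i)} := A_i(C,\mathbf{x}_i), \qquad \mathbf{v}^{(i)} := A_i(C\setminus Y,\mathbf{x}'_i),$$
so that $\mathbf{x}_{i+1}$ and $\mathbf{x}'_{i+1}$ are, respectively, the outputs of $T_i$ and $T'_i$ with these two pre-MLP vectors through their residual and MLP branches. \cref{thm:gemma_block} then supplies closed-form updates $\Delta W_{\text{gate}}^{(i)},\Delta W_{\text{up}}^{(i)},\Delta \mathbf{m}^{(i)}$, expressed as explicit functions of $\mathbf{v}^{(i)}$, $\mathbf{v}_C^{(i)}$, and the existing layer-$i$ MLP parameters, that enforce $T'_i(C\setminus Y,\mathbf{x}'_i)=T_i(C,\mathbf{x}_i)$ and therefore $\mathbf{x}'_{i+1}=\mathbf{x}_{i+1}$. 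Iterating to $i=L$ yields $T_{\boldsymbol{\Theta}'}(C\setminus Y,\mathbf{x}_0)=T_{\boldsymbol{\Theta}}(C,\mathbf{x}_0)$.

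The main obstacle is conceptual rather than computational: one must be careful about what the attention sub-layer is actually doing in the two forward passes. In the patched computation at layer $i$, the layer-$i$ hidden states of the tokens in $C\setminus Y$ used as keys and values by $A_i$ are themselves produced by layers $1,\dots,i-1$ with their \emph{already-patched} MLP parameters, so $\mathbf{v}^{(i)}$ is generically \emph{not} the attention output one would obtain by naively running $C\setminus Y$ through the original network. This does not break the argument: \cref{thm:gemma_block} only requires that some pair $(\mathbf{v}^{(i)},\mathbf{v}_C^{(i)})$ is available when constructing the block-$i$ patch, and the induction hypothesis $\mathbf{x}'_i=\mathbf{x}_i$ guarantees precisely this by making both attention calls share the same query input. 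It does, however, dictate that the patches must be computed causally, bottom-up through the network with the preceding patched layers already in place, which is exactly the order followed by \cref{alg:multilayer}. Finally, one inherits from \cref{thm:gemma_block} a pointwise non-vanishing condition on $f$ at each layer, which must be carried as a hypothesis through the induction.
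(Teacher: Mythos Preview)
Your proof is correct and follows essentially the same layer-by-layer induction as the paper, invoking \cref{thm:gemma_block} at each block to maintain the invariant $\mathbf{x}'_i=\mathbf{x}_i$. Your explicit attention to the fact that the patched earlier layers alter the context tokens' keys/values at layer $i$ (and to the inherited non-vanishing hypothesis on $f$) is actually more careful than the paper's own proof, which simply asserts that the conditions of \cref{thm:gemma_block} are met once the block input matches.
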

\begin{proof}
We proceed by induction on the layer index $k$, from $1$ to $L$.

For the base case ($k=1$), the input is the token embedding $\mathbf{x}_1$, which is identical for both the original model (with full context $C$) and the updated model (with reduced context $C \setminus Y$). Per Theorem~\ref{thm:gemma_block}, we can therefore find a parameter update $\boldsymbol{\theta}'_1$ for the first transformer block such that its output $\mathbf{x}'_2 = T_1(C \setminus Y, \mathbf{x}_1; \boldsymbol{\theta}'_1)$ is identical to the original output $\mathbf{x}_2 = T_1(C, \mathbf{x}_1; \boldsymbol{\theta}_1)$.

Now, assume for a layer $k > 1$ that we have chosen updates $\{\boldsymbol{\theta}'_1, \dots, \boldsymbol{\theta}'_{k-1}\}$ such that the input to the $k$-th transformer block is identical in both forward passes, i.e., $\mathbf{x}'_k = \mathbf{x}_k$. The outputs of this transformer block are $\mathbf{x}_{k+1} = T_k(C, \mathbf{x}_k; \boldsymbol{\theta}_k)$ for the original model and $\mathbf{x}'_{k+1} = T_k(C \setminus Y, \mathbf{x}_k; \boldsymbol{\theta}'_k)$ for the updated one.

Since the transformer block input $\mathbf{x}_k$ is equal for both, the conditions of Theorem~\ref{thm:gemma_block} are met. Thus, an update $\boldsymbol{\theta}'_k$ exists that makes the outputs identical, ensuring $\mathbf{x}'_{k+1} = \mathbf{x}_{k+1}$.

By the principle of induction, this procedure can be applied sequentially for all layers $k=1, \dots, L$. Each step preserves the hidden state, guaranteeing that the final output of the updated model with reduced context is identical to the original model's output.
\end{proof}
\subsection{Algorithm for Multi-Layer Updates}
\label{sec:algorithm}

The inductive proof of \cref{thm:multilayer} gives rise to a practical, layer-by-layer algorithm for computing the required weight updates for the entire model. The key is to first perform a forward pass with the full context to record the target activations at each layer. Then, in a second sequence of passes, we compute the updates for each layer sequentially. To compute the update at a layer, we use the output of the previous layer with the previous patch applied, this makes the algorithm self-correcting in the presence of numerical errors.

It is theoretically possible to fully incorporate the context using just the final layer, however this runs into practical issues in real-world scenarios. See \cref{app:experiments} for experiments investigating this.

\begin{algorithm}[htbp]
\caption{Compute Multi-Layer Implicit Weight Updates}
\label{alg:multilayer}
\begin{algorithmic}[1]
\REQUIRE Model parameters $\boldsymbol{\Theta} = \{\boldsymbol{\theta}_1, \dots, \boldsymbol{\theta}_L\}$, input $\mathbf{x}$, full context $C$, reduced context $C \setminus Y$.
\STATE \textbf{// Step 1: Record target activations with full context}
\STATE Let $\mathbf{x}_1 = \text{Embed}(C, \mathbf{x})$.
\STATE Initialize list of target activations $T = []$.
\FOR{$l=1$ to $L$}
    \STATE $\mathbf{x}_{l+1} = \text{Block}_l(\mathbf{x}_{l}; \boldsymbol{\theta}_l)$.
    \STATE Append $\mathbf{x}_{l+1}$ to $T$.
\ENDFOR
\STATE \textbf{// Step 2: Compute updates layer by layer}
\STATE Let $\mathbf{x}'_1 = \text{Embed}(C\setminus Y, \mathbf{x})$.
\STATE Initialize list of updated parameters $\boldsymbol{\Theta}' = []$.
\FOR{$l=1$ to $L$}
    \STATE Let $\mathbf{x}_{l+1, \text{target}} = T[l]$.
    \STATE // Compute update $\Delta\boldsymbol{\theta}_l$ needed for Block$_l(\mathbf{x}'_{l}; \boldsymbol{\theta}_l + \Delta\boldsymbol{\theta}_l)$ to output $\mathbf{x}_{l+1, \text{target}}$ via Eq. \ref{eq:delta_w_gate}-\ref{eq:delta_m} (Theorem \ref{thm:gemma_block}) or Appendix \ref{app:numerically-stable-update}.
    \STATE $\Delta\boldsymbol{\theta}_l = \text{ComputeSingleBlockUpdate}(\mathbf{x}'_{l}, \mathbf{x}_{l+1, \text{target}}, \boldsymbol{\theta}_l)$.
    \STATE $\boldsymbol{\theta}'_l = \boldsymbol{\theta}_l + \Delta\boldsymbol{\theta}_l$.
    \STATE Append $\boldsymbol{\theta}'_l$ to $\boldsymbol{\Theta}'$.
    \STATE // The next layer's input is the output from the current layer to absorb any numerical errors.
    \STATE $\mathbf{x}'_{l+1} = \text{Block}_l(\mathbf{x}'_{l}; \boldsymbol{\theta}'_l)$.
\ENDFOR
\STATE \textbf{return} $\boldsymbol{\Theta}'$.
\end{algorithmic}
\end{algorithm}

\section{Experimental Validation}
\label{sec:experiments}

\begin{figure*}[ht!]
    \centering
    \includegraphics[width=0.75\textwidth]{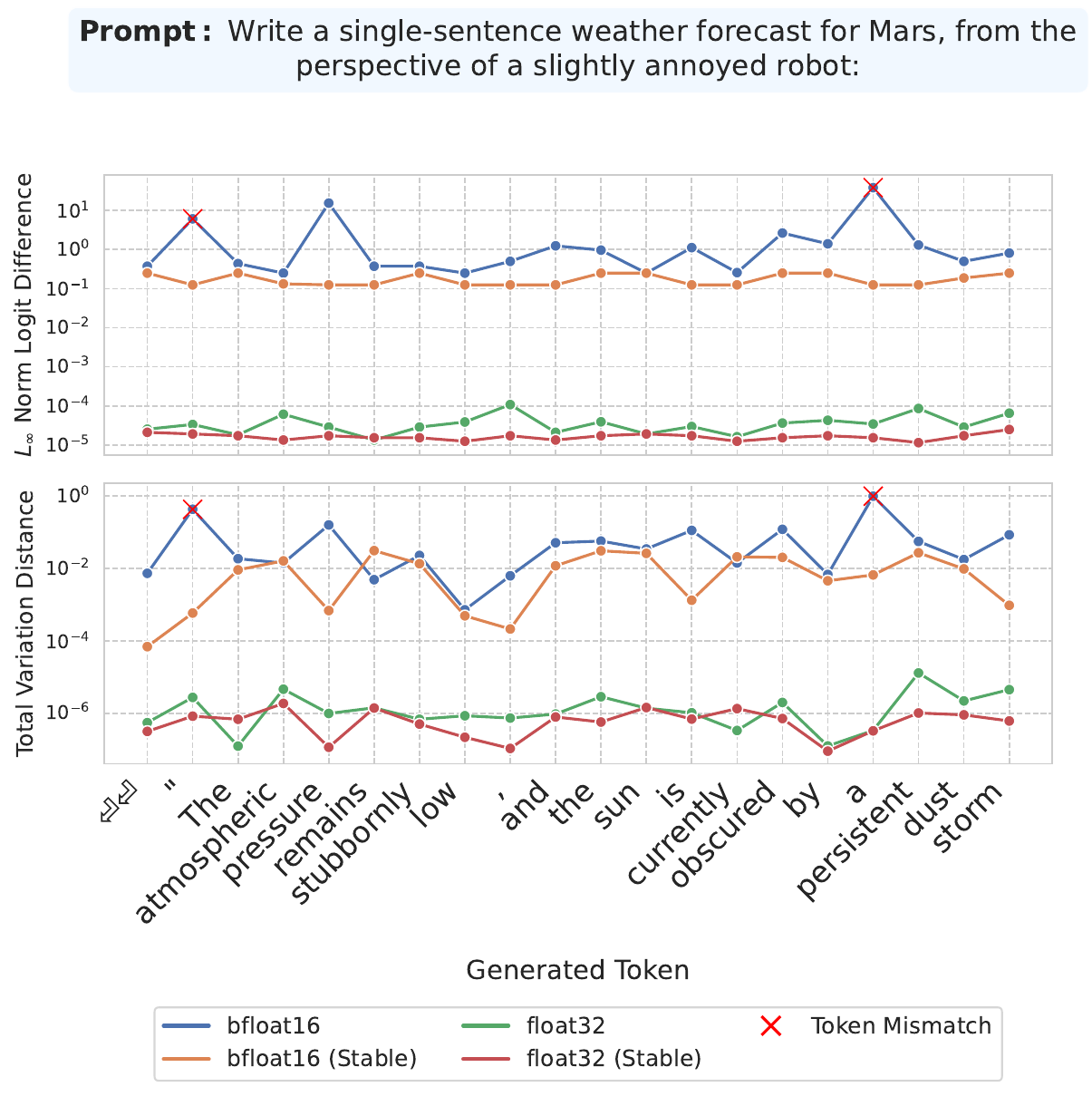}
    \caption{
        \textbf{Comparison of generation metrics between the original and updated models.} 
        The top plot shows the $L_\infty$ norm of the logit difference and the bottom plot shows the Total Variation Distance plotted at each step of the token generation process. The x-axis displays the sequence of generated tokens. We show this separately for each platform/data type. A red `X' indicates that the predicted tokens did not match there.
    }
    \label{fig:generation_metrics}
\end{figure*}

To validate our theoretical findings, we perform experiments to confirm that our computed weight updates perfectly replicate the behavior of standard in-context learning. Specifically, we test the hypothesis that the updated model, operating without context, is functionally equivalent to the original model with context. This equivalence is evaluated by comparing their output probability distributions on a token-by-token basis during generation.

\subsection{Setup}

We use instruction-tuned Gemma 3 1B (Instruction Tuned) and 4B models. The task is to generate a fictional forecast based on a specific instructional prompt. The prompt, which serves as the context $C$ for the initial update, is: ``Write a single-sentence weather forecast for Mars, from the perspective of a slightly annoyed robot:''. Other prompts are shown in \cref{app:experiments}.

The experiment compares two scenarios:
\begin{enumerate}[noitemsep]
    \item \textbf{Baseline (with Context):} Generate the forecast using the original model, conditioned on the full prompt $C$.
    \item \textbf{Updated Model (No Context):} The forecast is generated autoregressively. For each new token, we compute a new set of updated weights $\boldsymbol{\Theta}'$ using \cref{alg:multilayer}. \emph{This update absorbs the initial prompt $C$ as well as all tokens generated in previous steps.} The next token is then sampled from the model with these recomputed weights, without providing \emph{any} explicit context.
\end{enumerate}

To isolate the effect of the updates at each step, we compare the logit distributions for the next token given an identical generation history. If the models' top token choices diverge, we record the difference but force the updated model to proceed with the baseline's chosen token for all subsequent steps, allowing us to continue comparing the distributions on the same sequence. If we would ever divide by zero due to the conditions not being satisfied, we instead divide by 1 to avoid any intermediate NaN elements.

\subsection{Results}

Our objective is to verify that the influence of the context has been fully compiled into the updated model's weights. We hypothesize that the updated model, without context, will perfectly replicate the generation of the original model with context. To test this, we compare the models' outputs at each step of the generation process for a given context. Note that we re-calculate the updated parameters for each token.

Because the transformer equations are highly underdetermined, infinitely many updates can enforce output equivalence. Our algorithm targets specific formulations based on three desiderata:
\begin{itemize}[noitemsep,topsep=0pt]
    \item \textbf{Rank-1 patches:} The most minimal modification to a weight matrix, making them highly amenable to mechanistic interpretability.
    \item \textbf{Layer distribution:} reduces the update norm at any single layer (see~\cref{app:starting_layer}), again implying minimality.
    \item \textbf{Matrix vs vector specificity:} Matrix updates are input-conditional, altering outputs only along specific directional vectors. This, as well as numerical stability, motivates our constrained RMSNorm inversion (\cref{app:numerically-stable-update}) to absorb context into matrices rather than the global scaling vector $\mathbf{m}$.
\end{itemize}

To show the universality of this update, we run experiments with both textual and image contexts. We can see the results for textual contexts in \cref{fig:generation_metrics,fig:generation_summary} and image contexts in \cref{fig:image_generation_metrics}. We can see that the logit diffs remain extremely small with \texttt{float32} maintaining perfect token matching.

To demonstrate architectural universality, we replicated our evaluation suite on the Falcon architecture. We find that our controllability updates are equally effective here, yielding near-identical output distributions to the contextualized baseline with perfect token matching even for \texttt{bfloat16}. Full metric comparisons and the complete set of generation graphs for Falcon are provided in \cref{app:falcon_results}.

The comparison is based on the following metrics:
\begin{itemize}[noitemsep]
    \item \textbf{Token-Level Matching:} A direct check to ensure both models sample the identical token at each step using argmax generation.
    \item \textbf{$L_\infty$ Norm of Logits:} The maximum absolute distance between the output logits, quantifying the difference in their raw predictions.
    \item \textbf{Total Variation Distance:} A measure of similarity between the full probability distributions over the vocabulary. $\text{TVD}(p, q) = \frac{1}{2} \| p-q \|_1$
\end{itemize}

The results for these metrics, shown in Figure~\ref{fig:generation_metrics}, confirm our hypothesis in the case of \texttt{float32}. As we move towards real-world scenarios such as \texttt{bfloat16}, they diverge slightly due to numerical precision. Even in the least accurate setup (\texttt{bfloat16}), we get 87.5\% agreement on token predictions, swapping to the numerically stable version pushes this up to 100\%, on par with \texttt{float32}. The \texttt{bfloat16 (Stable)} uses a more numerically stable update described in \cref{app:numerically-stable-update}. We show other metrics in \cref{app:experiments}.

We can see that the runs with \texttt{float32} are almost exact. The update $\Delta\mathbf{m}=\frac{\mathbf{v}_C - \mathbf{v}}{f(W_{\text{gate}}\mathbf{z}_C, W_{\text{up}}\mathbf{z}_C)}$ involves element-wise division by potentially very small (or 0) numbers. This results in the extreme sensitivity to data type. We can mitigate this by using a more numerically stable update that reduces the need to update $\mathbf{m}$, but we cannot eliminate it as the high dimensionality combined with the low precision of \texttt{bfloat16} result in these numerical issues. For Falcon, the results are less sensitive to numerical issues due to the lack of post-norm and this is not required.

\begin{figure*}[htbp]
    \centering
    \includegraphics[width=0.75\textwidth]{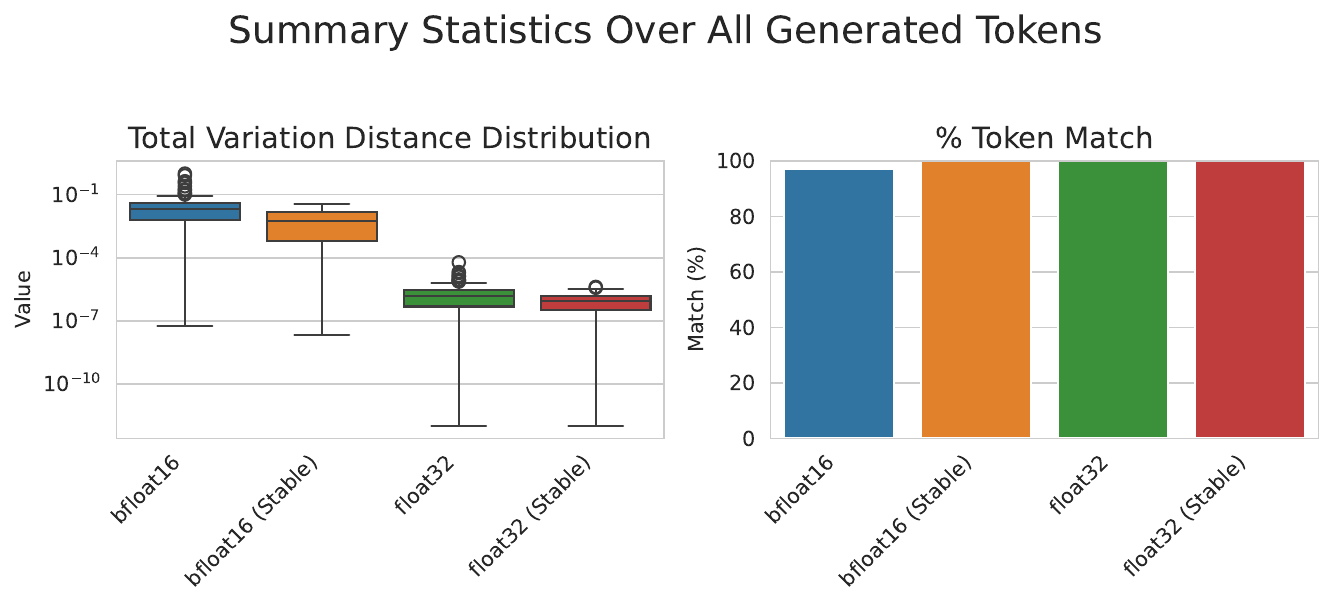}
    \caption{
        \textbf{Comparison of update accuracy for different data types and updates.} 
        Here we show the distribution of the logit difference and the accuracy percentage over five textual generations.
    }
    \label{fig:generation_summary}
\end{figure*}

\begin{figure*}[ht!]
    \centering
    \includegraphics[width=0.75\textwidth]{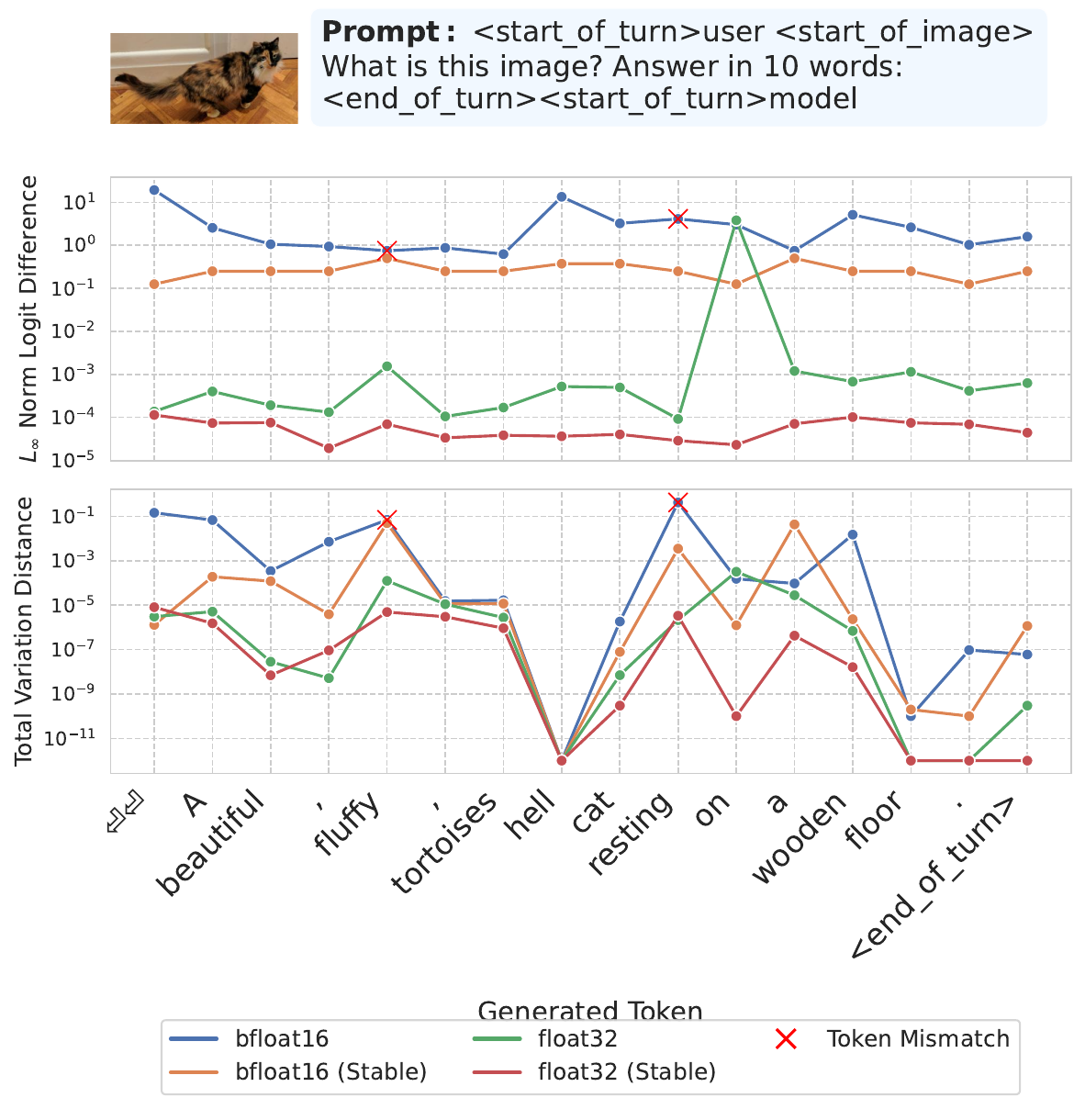}
    \caption{
        \textbf{Comparison of generation metrics between the original and updated models on images.}
        This is a matching experiment as \cref{fig:generation_metrics} but on Gemma 3 4B with an image as part of the context. We can see that this method continues to work with multi-modal input.
    }
    \label{fig:image_generation_metrics}
\end{figure*}

\section{A General Framework for Implicit Updates}
\label{sec:general}

\begin{table*}[t]
\centering
\small
\resizebox{0.95\textwidth}{!}{%
\begin{tabularx}{\textwidth}{@{} 
    >{\raggedright\arraybackslash}X 
    >{\raggedright\arraybackslash}X 
    >{\centering\arraybackslash}X 
@{} }
\toprule
\textbf{Component / Condition} & \textbf{Notes} & \textbf{Required Update} \\
\midrule

\textbf{Input Update for MLP} (\cref{lem:input_controllability_mlp})
& For each input matrix $W_i$ multiplied by $\mathbf{v}_C$ (or $\mathbf{v}$ for no context).
& $\displaystyle \Delta W_{i}=\frac{W_{i}(\mathbf{v}_C-\mathbf{v})\mathbf{v}^{\top}}{\|\mathbf{v}\|^{2}}$ \\
\addlinespace

\textbf{Input Update with Pre-Normalization} (\cref{lem:input_controllability_norm})
& Let $\mathbf{z}=N(\mathbf{v})$ and $\mathbf{z}_C=N(\mathbf{v}_C)$. This is the update for each input matrix $W_i$.
& $\displaystyle \Delta W_i=\frac{W_i(\mathbf{z}_C-\mathbf{z})\mathbf{z}^\top}{\|\mathbf{z}\|^2}$ \\
\addlinespace

\textbf{Output Update: Outer Bias} (\cref{lem:output_bias})
& Covers the bias term in post-LayerNorm ($\boldsymbol{\beta}$).
& $\displaystyle \Delta \mathbf{b}=\Delta A_{\mathbf{x}}(Y)$ \\
\addlinespace

\textbf{Output Update: Outer Weight Matrix} (\cref{lem:output_weight})
& Covers Llama~\citep{Touvron2023llama}, Falcon~\citep{almazrouei2023falcon} and others. $\mathbf{y}=f(\mathbf{v})$ is the pre-multiply output.
& $\displaystyle \Delta W'=\frac{\Delta A_{\mathbf{x}}(Y)\mathbf{y}^{\top}}{\|\mathbf{y}\|^{2}}$ \\
\addlinespace

\textbf{Output Update: Elementwise Multiply} (\cref{lem:output_elem_multiply})
& Covers the learnable scale $\mathbf{m}$ in post-RMSNorm. $\mathbf{h}=f(\mathbf{v})$ is the pre-scale output.
& $\displaystyle \Delta \mathbf{m}=\Delta A_{\mathbf{x}}(Y)\oslash \mathbf{h}$ \\
\addlinespace

\textbf{Mixture of Experts (MoE)} (\cref{lem:output_moe})
& Applies to Mixtral~\citep{Jiang2024mixtral}. $S$ is the sum of the router gates.
& Update each active expert $j$'s output params to add $\Delta A_{\mathbf{x}}(Y)/S$. \\
\addlinespace

\textbf{Parallel Transformer Blocks} (\cref{lem:parallel_block})
& Applies to GPT-J~\citep{gpt-j}. The MLP branch is context-independent.
& Update MLP output params to absorb the entire attention change, $\Delta A_\mathbf{x}(Y)$. \\

\bottomrule
\end{tabularx}
}
\caption{Analysis of common architectural forms and their corresponding weight updates under the controllability framework.}
\label{tab:results}
\end{table*}

The specific results for the Gemma architecture can be generalized by introducing two fundamental properties of functions within a transformer block. Let $A(C, \mathbf{x})$ be the output of a contextual layer (e.g., attention) for input $\mathbf{x}$ and context $C$. We can then define the \emph{contextual difference vector} for an input $\mathbf{x}$ and context $Y$ as:
$$
\Delta A_{\mathbf{x}}(Y) = A(C, \mathbf{x}) - A(C \setminus Y, \mathbf{x})
$$
This vector represents the entire change induced by the context $Y$ in the attention sub-layer's output.

\begin{definition}[Input Controllability]\label{dfn:input-controllability}
A function $f(\mathbf{z}; \boldsymbol{\theta}_f)$ is \textbf{input-controllable} if for any non-zero input vectors $\mathbf{z}$ and $\mathbf{z}+\Delta\mathbf{z}$, there exists a parameter update $\Delta_\mathbf{z}\boldsymbol{\theta}_f$ such that $f(\mathbf{z}+\Delta\mathbf{z}; \boldsymbol{\theta}_f) = f(\mathbf{z}; \boldsymbol{\theta}_f + \Delta_\mathbf{z}\boldsymbol{\theta}_f)$. This update $\Delta_\mathbf{z}\boldsymbol{\theta}_f$ may depend on $\mathbf{z}$, $\Delta\mathbf{z}$ and $\boldsymbol{\theta}_f$.
\end{definition}

\begin{definition}[Output Controllability]\label{dfn:output-controllability}
A function $g(\mathbf{v}; \boldsymbol{\theta}_g)$ is \textbf{output-controllable} if for any fixed non-zero input $\mathbf{v}$ and any desired difference vector $\Delta \mathbf{y}$, there exists a parameter update $\Delta_\mathbf{v}\boldsymbol{\theta}_g$ such that $g(\mathbf{v}; \boldsymbol{\theta}_g) + \Delta \mathbf{y} = g(\mathbf{v}; \boldsymbol{\theta}_g + \Delta_\mathbf{v}\boldsymbol{\theta}_g)$. This update $\Delta_\mathbf{v}\boldsymbol{\theta}_g$ may depend on $\mathbf{v}$, $\Delta\mathbf{y}$ and $\boldsymbol{\theta}_g$.
\end{definition}

With these definitions, we can state a simpler, more general theorem for implicit weight updates, for which the Gemma-specific theorem is a special case.

\begin{theorem}[Unified Theorem for Residual Blocks] \label{thm:unified_residual}
For a residual MLP block of the form $T(C,\mathbf{x}) = A(C,\mathbf{x}) + g(f(A(C, \mathbf{x}); \boldsymbol{\theta}_f); \boldsymbol{\theta}_g)$, a perfect implicit weight update exists for context $Y$ if the inner function $f$ is input-controllable and the outer function $g$ is output-controllable, provided that the activations are non-zero.
\end{theorem}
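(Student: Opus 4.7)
The plan is to mirror the ``separation of concerns'' already flagged in the note after \cref{thm:gemma_block}: handle the mismatch in the MLP's internal state with the inner-function update, and then absorb the leftover residual-path mismatch with the outer-function update. Concretely, let $\mathbf{v}_C = A(C, \mathbf{x})$ and $\mathbf{v} = A(C\setminus Y, \mathbf{x})$, so that $\Delta A_{\mathbf{x}}(Y) = \mathbf{v}_C - \mathbf{v}$. The target identity I want to establish is
\[
\mathbf{v} + g\bigl(f(\mathbf{v}; \boldsymbol{\theta}_f + \Delta\boldsymbol{\theta}_f); \boldsymbol{\theta}_g + \Delta\boldsymbol{\theta}_g\bigr) \;=\; \mathbf{v}_C + g\bigl(f(\mathbf{v}_C; \boldsymbol{\theta}_f); \boldsymbol{\theta}_g\bigr),
\]
for appropriately chosen $\Delta\boldsymbol{\theta}_f$ and $\Delta\boldsymbol{\theta}_g$ depending on $\mathbf{x}$, $C$, and $Y$.

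First, I would apply \cref{dfn:input-controllability} to $f$ with the choice $\mathbf{z} = \mathbf{v}$ and $\Delta\mathbf{z} = \mathbf{v}_C - \mathbf{v}$. This yields some $\Delta\boldsymbol{\theta}_f$ for which $f(\mathbf{v}_C; \boldsymbol{\theta}_f) = f(\mathbf{v}; \boldsymbol{\theta}_f + \Delta\boldsymbol{\theta}_f)$. Denote this common intermediate activation by $\mathbf{h}$. The effect is that, after the update, the modified block with input $\mathbf{v}$ produces the same pre-$g$ activation as the original block with input $\mathbf{v}_C$; this is exactly the abstract analog of the rank-1 patches to $W_{\text{gate}}$ and $W_{\text{up}}$ in \cref{thm:gemma_block}.

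Next, I would apply \cref{dfn:output-controllability} to $g$ at the fixed input $\mathbf{h}$ with desired shift $\Delta\mathbf{y} = \Delta A_{\mathbf{x}}(Y) = \mathbf{v}_C - \mathbf{v}$. This provides $\Delta\boldsymbol{\theta}_g$ with $g(\mathbf{h}; \boldsymbol{\theta}_g + \Delta\boldsymbol{\theta}_g) = g(\mathbf{h}; \boldsymbol{\theta}_g) + (\mathbf{v}_C - \mathbf{v})$, which is the abstract analog of the $\Delta\mathbf{m}$ patch. Substituting the two choices back into $T'(C\setminus Y, \mathbf{x})$ and using $f(\mathbf{v}; \boldsymbol{\theta}_f + \Delta\boldsymbol{\theta}_f) = \mathbf{h}$ collapses the expression to $\mathbf{v}_C + g(\mathbf{h}; \boldsymbol{\theta}_g) = T(C, \mathbf{x})$, closing the argument.

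There isn't really a hard step here once the definitions are unpacked; the only thing to be careful about is the \emph{order} of the two applications, which is forced by the dependency structure. Output controllability of $g$ is invoked at a specific input $\mathbf{h}$, and this input is only guaranteed to coincide between the two models \emph{after} the inner update has been fixed by input controllability. Doing it in the reverse order would require a stronger joint-controllability assumption, so the proof should explicitly point out that the staged construction is what lets these two independently-defined properties compose. I would end with a short remark observing that \cref{thm:gemma_block} is recovered by checking that the gated MLP ingredients $(\mathbf{z} \mapsto W_{\text{gate}}\mathbf{z},\ \mathbf{z} \mapsto W_{\text{up}}\mathbf{z})$ composed with the pre-RMSNorm are input-controllable via rank-1 patches, while elementwise scaling by $\mathbf{m}$ is output-controllable (away from the zero set of $\mathbf{h}_{\text{mlp}}$) via $\Delta\mathbf{m} = \Delta\mathbf{y} \oslash \mathbf{h}_{\text{mlp}}$, so that the unified theorem indeed subsumes the Gemma case and, by the same reasoning, the Llama, Mistral and MoE variants mentioned in \cref{sec:general}.
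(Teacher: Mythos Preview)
Your proposal is correct and follows essentially the same two-step argument as the paper: first use input controllability of $f$ to align the intermediate activation (your $\mathbf{h}$, the paper's $\mathbf{z}_{mlp}$), then use output controllability of $g$ at that common activation to absorb the residual-path difference $\mathbf{v}_C-\mathbf{v}$. Your explicit remark about the forced ordering of the two steps and the closing observation recovering \cref{thm:gemma_block} as a special case are nice elaborations, but the core proof is the same as the paper's.
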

\begin{proof}
Let $\mathbf{v} = A(C \setminus Y, \mathbf{x})$ and the contextual difference be $\Delta\mathbf{v} = A(C,\mathbf{x}) - A(C \setminus Y, \mathbf{x})$, so $A(C, \mathbf{x}) = \mathbf{v} + \Delta\mathbf{v}$.
The original transformer block output is $T(C, \mathbf{x}) = (\mathbf{v} + \Delta\mathbf{v}) + g(f(\mathbf{v} + \Delta\mathbf{v}; \boldsymbol{\theta}_f); \boldsymbol{\theta}_g)$.
We seek updated parameters $\boldsymbol{\theta}'_f, \boldsymbol{\theta}'_g$ such that the output with reduced context is identical: $T'(C \setminus Y, \mathbf{x}) = \mathbf{v} + g(f(\mathbf{v}; \boldsymbol{\theta}'_f); \boldsymbol{\theta}'_g)$.
The proof proceeds in two steps:
\begin{enumerate}
    \item \textbf{Correct the Input Change}: The input to $f$ changes from $\mathbf{v}$ to $\mathbf{v}+\Delta\mathbf{v}$. Since $f$ is input-controllable, we can find an update $\Delta\boldsymbol{\theta}_f$ such that $f(\mathbf{v}+\Delta\mathbf{v}; \boldsymbol{\theta}_f) = f(\mathbf{v}; \boldsymbol{\theta}_f+\Delta\boldsymbol{\theta}_f)$. Let this common intermediate vector be $\mathbf{z}_{mlp}$.
    
    \item \textbf{Correct the Residual Change}: After the first step, the equality we must satisfy is:
    $(\mathbf{v} + \Delta\mathbf{v}) + g(\mathbf{z}_{mlp}; \boldsymbol{\theta}_g) = \mathbf{v} + g(\mathbf{z}_{mlp}; \boldsymbol{\theta}'_g)$.
    This simplifies to $g(\mathbf{z}_{mlp}; \boldsymbol{\theta}'_g) - g(\mathbf{z}_{mlp}; \boldsymbol{\theta}_g) = \Delta\mathbf{v}$.
    This is precisely the definition of output controllability for $g$. Since $g$ is output-controllable, an update $\Delta\boldsymbol{\theta}_g$ exists to satisfy this condition.
\end{enumerate}
With updates to both $\boldsymbol{\theta}_f$ and $\boldsymbol{\theta}_g$, a perfect match is achieved.
\end{proof}
Having established \cref{thm:unified_residual}, we can apply it to any new architecture, provided it satisfies the structure described above. We prove input controllability for weight matrix multiplications (of both the direct input and of norms) and output controllability of outer bias, weight matrix multiplication, elementwise multiplication and mixture of experts. We also show an update for parallel transformer blocks. These are all summarized in \cref{tab:results} and encompass most common architectures such as Gemma~\citep{kamath2025gemma3}, Llama~\citep{Touvron2023llama}, Falcon~\citep{almazrouei2023falcon}, Mistral/Mixtral~\citep{Jiang2023mistral, Jiang2024mixtral}, Qwen~\citep{Yang2025qwen3}, GPT-2~\citep{Radford2019language} and GPT-J~\citep{gpt-j}.

\subsection{Limits of controllability}

While our framework accommodates all major modern architectures, its constraints are non-trivial. For instance, if a block uses RMS post-norm \textit{without} a trainable scaling vector $\mathbf{m}$, output controllability fails. Without $\mathbf{m}$, the output is strictly constrained to the $L_2$ sphere, preventing the model from stretching the vector to absorb a contextual shift $\Delta$ whose scale differs from the norm. We provide a formal proof of this impossibility in Appendix \ref{app:rms_impossibility}.

\subsection{Connection to Implicit Gradient Updates}
These implicit weight shifts are not merely algebraic rearrangements; they connect directly to standard learning dynamics. Building on \citet{dherin2025learning}, our context-equivalent updates can be formulated exactly as gradient descent steps on a complex trace-loss objective. In \cref{app:loss_derivation} we derive this explicitly for both the standard outer-weight-matrix architecture (recovering the Llama/Falcon update via telescoping) and the numerically stable Gemma variant involving the RMSNorm inversion and the scale vector $\mathbf{m}$.

\section{Conclusion}

In this work, we have generalized the theory of implicit weight updates from single layer vanilla transformers to the complex, multi-layer architectures of modern LLMs like Llama~\citep{Touvron2023llama} and Gemma~\citep{kamath2025gemma3}. We began by providing a constructive proof for a single Gemma-style transformer block, deriving the exact rank-1 updates needed to compile context into its MLP weights. We then extended this result to full L-layer models and presented a practical algorithm for computing the updates. We showed practical experiments on Gemma 3 1B and Falcon 7B which achieved almost identical output distributions and the same textual generation.

Finally, we abstracted these findings into the unifying concepts of \emph{input controllability} and \emph{output controllability}. This framework simplifies the analysis and provides a clear theoretical foundation for how modern language model architectures implicitly fine-tune themselves on their context. This work offers a robust and intuitive tool for understanding the mechanisms of in-context learning and for designing future transformer architectures.

We note that our framework provides a descriptive lens for understanding the per-token effect of context, rather than a prescriptive algorithm for efficient inference. The derived updates are token-dependent and must be recomputed at each step to maintain mathematical equivalence. The derived parameter changes do not immediately yield a global update that absorbs the context for every query. This reinforces the view that in-context learning is a dynamic process where the model effectively reconfigures its functional form for each successive prediction.

\subsection{Limitations}
While our framework establishes a rigorous mathematical equivalence between context and weight updates, its scope has clear boundaries:
\begin{itemize}[noitemsep,topsep=0pt]
    \item \textbf{Token-dependent equivalence:} Our exact updates perfectly replicate the next-token distribution for a specific history. Naively applying these token-specific updates to multi-token free generation is disrupted by the attention values of newly generated tokens.
    \item \textbf{Mechanistic vs. Algorithmic insights:} Re-parameterizing context as weight differences allows us to inspect how semantic content is absorbed into weights. However, proving that context \textit{can} be compressed this way does not imply that the transformer explicitly executes this exact learning algorithm during standard inference.
    \item \textbf{Architectural prerequisites:} Our framework requires output controllability of the outer block function. We prove in \cref{app:rms_impossibility} that this fails for post-RMSNorm blocks lacking a trainable scale vector, however this does not appear in any real-world architectures.
\end{itemize}

\subsection{Future Work \& Applications}
\cref{thm:unified_residual} provides the foundation for an automated ``context compiler.'' By traversing the computational graph of any architecture, one could automatically track values and apply mathematically valid controllability updates optimized for specific constraints (e.g., minimizing numerical drift or our desiderata above). 

Furthermore, extending our single-token equivalence to unconstrained free generation requires aggregating these transient updates into reusable ``thought patches.'' In concurrent work, \citet{mazzawi2025transmuting} explores this aggregation, directly leveraging the modern architectural updates and numerical stability improvements derived here to successfully absorb complex prompts for multi-token generation.

\section*{Acknowledgments}
We would like to thank Hanna Mazzawi, Michael Wunder, Mor Geva, Peter Bartlett and Spencer Frei for their feedback and input into this work.

\section*{Impact Statement}
This paper presents work whose goal is to advance the field of Machine Learning, specifically the theoretical understanding and mechanistic interpretability of Large Language Models. Because our primary contribution is foundational and analytical, this work does not present any direct or immediate negative societal consequences. In the long term, we hope our mathematical framework will facilitate the development of more transparent, efficient, and robust AI architectures.

    \bibliographystyle{plainnat}
    \bibliography{references}

\newpage
\appendix
\onecolumn

\section{General Proofs}

\subsection{Proofs of Controllability}

\begin{lemma}[Input Controllability of MLPs]
\label{lem:input_controllability_mlp}
Any function $f(\mathbf{z}; \{W_i\})$ whose initial operations consist of one or more linear projections of the input, such as $f(\dots, W_i\mathbf{z}, \dots)$, is input-controllable, provided $\mathbf{z} \ne \mathbf{0}$. This includes standard and gated MLPs (e.g., Gemma's GeGLU, Llama's SwiGLU).
\end{lemma}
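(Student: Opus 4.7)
The plan is to reduce the claim to the elementary algebraic problem of solving $(W_i + \Delta W_i)\mathbf{z} = W_i(\mathbf{z} + \Delta\mathbf{z})$ for each of the input projection matrices $W_i$ appearing in $f$. The key observation is that by hypothesis the function $f$ reads its input only through the collection of linear projections $\{W_i \mathbf{z}\}$; nothing else downstream depends on $\mathbf{z}$ directly. Hence if I can find updates $\{\Delta W_i\}$ such that $(W_i + \Delta W_i)\mathbf{z} = W_i(\mathbf{z}+\Delta\mathbf{z})$ holds simultaneously for every $i$, the entire downstream computation is identical in the two sides of the input-controllability identity, and the lemma follows.

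The step I would carry out next is to exhibit an explicit closed-form update realising each such equality. The required identity on $\Delta W_i$ is $\Delta W_i \mathbf{z} = W_i \Delta\mathbf{z}$, i.e.\ $\Delta W_i$ must send the fixed vector $\mathbf{z}$ to the fixed vector $W_i \Delta\mathbf{z}$. Since $\mathbf{z}\neq \mathbf{0}$, the rank-one ansatz
\begin{equation*}
\Delta W_i \;=\; \frac{W_i\,\Delta\mathbf{z}\,\mathbf{z}^\top}{\|\mathbf{z}\|^2}
\end{equation*}
satisfies $\Delta W_i \mathbf{z} = W_i \Delta\mathbf{z}\,(\mathbf{z}^\top\mathbf{z})/\|\mathbf{z}\|^2 = W_i\Delta\mathbf{z}$ by direct substitution; this is the same mechanism already exploited in the proof of \cref{thm:gemma_block} (both for $W_{\text{gate}}$ and $W_{\text{up}}$), so I would cite that computation rather than redo it. The update is defined provided $\mathbf{z}\neq \mathbf{0}$, matching the lemma's hypothesis.

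To finish, I would package the per-projection update into the global parameter perturbation $\Delta_{\mathbf{z}}\boldsymbol{\theta}_f$ by setting $\Delta W_i$ as above for every linear projection in the ``initial operations'' of $f$, and leaving all other parameters untouched. Substituting into $f$ then yields $f(\mathbf{z}; \boldsymbol{\theta}_f + \Delta_{\mathbf{z}}\boldsymbol{\theta}_f) = f(\mathbf{z}+\Delta\mathbf{z}; \boldsymbol{\theta}_f)$, exactly the requirement of \cref{dfn:input-controllability}. I would then briefly note the specialisations: for a standard MLP there is a single $W_1$; for gated MLPs such as GeGLU (Gemma) and SwiGLU (Llama) there are two input matrices, $W_{\text{gate}}$ and $W_{\text{up}}$, each of which receives an independent rank-one patch by the same construction.

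There is no genuine obstacle beyond clean bookkeeping: the only non-trivial point is recognising that a single vector equation $\Delta W_i \mathbf{z} = W_i \Delta\mathbf{z}$ is massively under-determined in matrix space, and that the rank-one outer product is the canonical minimal-rank solution (and also the minimum-Frobenius-norm solution). The mild pitfall to flag in the write-up is the $\mathbf{z}\neq \mathbf{0}$ caveat and the fact that the construction is not unique; any $\Delta W_i$ sending $\mathbf{z}$ to $W_i\Delta\mathbf{z}$ works, but the rank-one choice is what later lemmas invoke.
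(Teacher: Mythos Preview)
Your proposal is correct and follows essentially the same approach as the paper: reduce to the per-projection equation $\Delta W_i\mathbf{z} = W_i\Delta\mathbf{z}$, solve it with the rank-one outer product $\Delta W_i = (W_i\Delta\mathbf{z})\mathbf{z}^\top/\|\mathbf{z}\|^2$, and verify by substitution. Your write-up is slightly more expansive (noting non-uniqueness and the minimum-Frobenius-norm property), but the argument is identical in substance.
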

\begin{proof}
Let the input change from $\mathbf{z}$ to $\mathbf{z}+\Delta\mathbf{z}$. The arguments to $f$ change from $\{\dots, W_i\mathbf{z}, \dots\}$ to $\{\dots, W_i(\mathbf{z}+\Delta\mathbf{z}), \dots\}$. We need to find updates $\Delta W_i$ such that the new arguments, using the original input $\mathbf{z}$, are identical: $(W_i+\Delta W_i)\mathbf{z} = W_i(\mathbf{z}+\Delta\mathbf{z})$.
This simplifies to $\Delta W_i \mathbf{z} = W_i \Delta\mathbf{z}$. Since $\mathbf{z} \ne \mathbf{0}$, we have $\|\mathbf{z}\|^2 > 0$, so a valid rank-1 update for each matrix $W_i$ is given by:
\[
\Delta W_i = \frac{(W_i \Delta\mathbf{z})\mathbf{z}^\top}{\|\mathbf{z}\|^2}
\]
Substituting this back gives $\left(\frac{(W_i \Delta\mathbf{z})\mathbf{z}^\top}{\|\mathbf{z}\|^2}\right)\mathbf{z} = W_i \Delta\mathbf{z} \left(\frac{\mathbf{z}^\top\mathbf{z}}{\|\mathbf{z}\|^2}\right) = W_i \Delta\mathbf{z}$. Since updates exist for all input weight matrices, the function is input-controllable.
\end{proof}

\begin{lemma}[Input Controllability of Pre-Norm MLPs]
\label{lem:input_controllability_norm}
An MLP function preceded by a normalization layer, $f(N(\mathbf{v}); \{W_i\})$, is input-controllable for changes in the pre-normalized vector $\mathbf{v}$, provided $N(\mathbf{v}) \ne \mathbf{0}$.
\end{lemma}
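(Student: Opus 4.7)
The plan is to reduce this lemma to the previous one (\cref{lem:input_controllability_mlp}) by treating the normalized vector as the effective input to the MLP. Concretely, I would set $\mathbf{z} = N(\mathbf{v})$ and $\mathbf{z}_C = N(\mathbf{v}+\Delta\mathbf{v})$, and observe that the composed function $f(N(\cdot); \{W_i\})$ applies the first linear projections $W_i$ to $\mathbf{z}$ (respectively $\mathbf{z}_C$). The task of input controllability at the $\mathbf{v}$-level then becomes: find $\Delta W_i$ such that $(W_i + \Delta W_i)\mathbf{z} = W_i \mathbf{z}_C$, which is exactly the input-controllability condition of \cref{lem:input_controllability_mlp} with the effective input change $\mathbf{z}_C - \mathbf{z}$ in place of $\Delta\mathbf{z}$.

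Next, I would explicitly write down the rank-1 update
\[
\Delta W_i = \frac{W_i(\mathbf{z}_C - \mathbf{z})\mathbf{z}^\top}{\|\mathbf{z}\|^2},
\]
which is well-defined because the hypothesis $N(\mathbf{v}) \ne \mathbf{0}$ guarantees $\|\mathbf{z}\|^2 > 0$. A one-line verification then shows $\Delta W_i \mathbf{z} = W_i(\mathbf{z}_C - \mathbf{z})$, so $(W_i + \Delta W_i)\mathbf{z} = W_i \mathbf{z}_C$ for each $i$. Because every initial linear projection is corrected to the same value it would have taken under the perturbed input $\mathbf{v} + \Delta\mathbf{v}$, the full output of $f$ matches, giving input controllability.

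I expect essentially no obstacle here: the result is a direct reduction, and the only subtlety is that the nonlinearity of $N$ does not matter for the argument, since we only require the existence of an update that realises a prescribed change in the inner linear projections, and such a change is whatever $\mathbf{z}_C - \mathbf{z}$ happens to be. The closest thing to a real condition is the nonvanishing of $\mathbf{z}$, which is exactly the stated hypothesis and matches the analogous $\mathbf{z} \ne \mathbf{0}$ assumption in \cref{lem:input_controllability_mlp}. I would also briefly remark that this is the construction used for $\Delta W_{\text{gate}}$ and $\Delta W_{\text{up}}$ in \cref{thm:gemma_block}, so the lemma formalises and generalises exactly that step.
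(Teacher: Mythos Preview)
Your proposal is correct and follows essentially the same approach as the paper: both reduce to \cref{lem:input_controllability_mlp} by setting $\mathbf{z} = N(\mathbf{v})$, $\mathbf{z}_C = N(\mathbf{v}_C)$, and applying the rank-1 update $\Delta W_i = W_i(\mathbf{z}_C - \mathbf{z})\mathbf{z}^\top / \|\mathbf{z}\|^2$. Your additional remarks on why the nonlinearity of $N$ is irrelevant and the connection to the $\Delta W_{\text{gate}}$, $\Delta W_{\text{up}}$ construction in \cref{thm:gemma_block} are correct and go slightly beyond the paper's own exposition.
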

\begin{proof}
Let the pre-normalized input change from $\mathbf{v}$ to $\mathbf{v}_C$. The normalized input to the MLP changes from $\mathbf{z} = N(\mathbf{v})$ to $\mathbf{z}_C = N(\mathbf{v}_C)$. Let this change be $\Delta\mathbf{z} = \mathbf{z}_C - \mathbf{z}$. The problem then reduces to the case in \cref{lem:input_controllability_mlp}, where the input to the MLP changes by $\Delta\mathbf{z}$. The required update for each weight matrix $W_i$ is:
\[
\Delta W_i = \frac{(W_i (\mathbf{z}_C - \mathbf{z}))\mathbf{z}^\top}{\|\mathbf{z}\|^2}
\]
This makes the new input projection $(W_i+\Delta W_i)\mathbf{z} = W_i\mathbf{z}_C$, matching the argument of the function with the original parameters and context-full input. Thus, the pre-norm function is input-controllable.
\end{proof}

\begin{lemma}[Output Controllability of Outer Bias]
\label{lem:output_bias}
The function $g(\mathbf{v}; \mathbf{b}') = h(\mathbf{v}) + \mathbf{b}'$ is output-controllable.
\end{lemma}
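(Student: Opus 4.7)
The statement matches the definition of output controllability (\cref{dfn:output-controllability}) almost by inspection: the trainable parameter here is $\boldsymbol{\theta}_g = \mathbf{b}'$, so I need to exhibit, for any fixed $\mathbf{v}$ and any target difference $\Delta \mathbf{y}$, an update $\Delta \mathbf{b}'$ such that $g(\mathbf{v}; \mathbf{b}' + \Delta \mathbf{b}') = g(\mathbf{v}; \mathbf{b}') + \Delta \mathbf{y}$. The plan is to propose the update $\Delta \mathbf{b}' = \Delta \mathbf{y}$ and verify this by direct substitution, noting that $h(\mathbf{v})$ cancels on both sides because it does not depend on the parameter being modified.

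The verification is a single line: $g(\mathbf{v}; \mathbf{b}' + \Delta \mathbf{y}) = h(\mathbf{v}) + \mathbf{b}' + \Delta \mathbf{y} = g(\mathbf{v}; \mathbf{b}') + \Delta \mathbf{y}$. Since this candidate update exists and is well-defined for every $\Delta \mathbf{y}$ (it is simply $\Delta \mathbf{y}$ itself, with no division, no rank constraint, and no dependence on $\mathbf{v}$ or $h$), the function is output-controllable.

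There is no real obstacle in this lemma; it is the base case of the controllability toolkit and explains why the earlier vanilla-transformer analysis of \citet{dherin2025learning} could absorb the residual change with $\Delta \mathbf{b}_2 = \Delta \mathbf{v}$. The only thing worth flagging is that this simplicity is exactly what modern bias-free architectures (Gemma, Llama) destroy, which is why the later lemmas in the table need to work harder — rank-1 patches to an outer weight matrix, elementwise patches to $\mathbf{m}$, or MoE-style gated updates — to recover the same output-controllability property.
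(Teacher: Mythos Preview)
Your proof is correct and essentially identical to the paper's: both set $\Delta\mathbf{b}' = \Delta\mathbf{y}$ (the paper writes $\boldsymbol{\delta}$) and verify the equality by a one-line substitution in which $h(\mathbf{v})$ cancels. Your added commentary on why this trivial case motivates the harder bias-free lemmas is apt but extra.
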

\begin{proof}
For a desired output change $\boldsymbol{\delta}$, we need $(h(\mathbf{v}) + \mathbf{b}' + \Delta\mathbf{b}') - (h(\mathbf{v}) + \mathbf{b}') = \boldsymbol{\delta}$. This simplifies to $\Delta\mathbf{b}' = \boldsymbol{\delta}$, which always has a solution.
\end{proof}

\begin{lemma}[Output Controllability of Outer Weight Matrix]
\label{lem:output_weight}
The function $g(\mathbf{v}; W') = W'\mathbf{v}$ is output-controllable when $\mathbf{v} \ne \mathbf{0}$.
\end{lemma}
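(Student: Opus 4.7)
The plan is to prove output controllability directly by exhibiting an explicit rank-1 update. By the definition of output controllability, I need to show that for an arbitrary fixed $\mathbf{v} \neq \mathbf{0}$ and any target difference vector $\Delta \mathbf{y}$, there exists $\Delta W'$ such that $(W' + \Delta W')\mathbf{v} = W'\mathbf{v} + \Delta \mathbf{y}$. Expanding and canceling $W'\mathbf{v}$ on both sides, this reduces to finding $\Delta W'$ satisfying the linear matrix equation $\Delta W'\, \mathbf{v} = \Delta \mathbf{y}$.

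To solve this, I would propose the rank-1 ansatz
\[
\Delta W' = \frac{\Delta \mathbf{y}\, \mathbf{v}^{\top}}{\|\mathbf{v}\|^{2}},
\]
which mirrors the construction used in \cref{lem:input_controllability_mlp} and matches the entry reported in \cref{tab:results}. The verification is a one-line calculation: right-multiplying by $\mathbf{v}$ gives $\Delta W'\mathbf{v} = \Delta \mathbf{y}\,(\mathbf{v}^{\top}\mathbf{v}) / \|\mathbf{v}\|^{2} = \Delta \mathbf{y}$, as required. Plugging this back into the definition confirms $g(\mathbf{v}; W' + \Delta W') = g(\mathbf{v}; W') + \Delta \mathbf{y}$.

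The only subtlety is the hypothesis $\mathbf{v} \neq \mathbf{0}$, which is needed precisely to make the denominator $\|\mathbf{v}\|^{2}$ nonzero so that the update is well-defined. I would briefly note that this hypothesis is also necessary: if $\mathbf{v} = \mathbf{0}$ then $\Delta W'\,\mathbf{v} = \mathbf{0}$ for every $\Delta W'$, so no nonzero $\Delta \mathbf{y}$ could ever be realized, and output controllability fails. There is no real obstacle here; the lemma is essentially a restatement of the fact that for a nonzero vector $\mathbf{v}$, the linear map $M \mapsto M\mathbf{v}$ from matrices to vectors is surjective, witnessed constructively by the rank-1 update above.
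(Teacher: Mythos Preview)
Your proposal is correct and follows essentially the same approach as the paper: reduce to $\Delta W'\mathbf{v} = \Delta\mathbf{y}$ and exhibit the rank-1 solution $\Delta W' = \Delta\mathbf{y}\,\mathbf{v}^\top/\|\mathbf{v}\|^2$. Your added remarks on verifying the update and on the necessity of the hypothesis $\mathbf{v}\neq\mathbf{0}$ go slightly beyond the paper's terse proof but are entirely in the same spirit.
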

\begin{proof}
We need $(W'+\Delta W')\mathbf{v} - W'\mathbf{v} = \boldsymbol{\delta}$, which requires $\Delta W' \mathbf{v} = \boldsymbol{\delta}$. A rank-1 update of the form $\Delta W' = \frac{\boldsymbol{\delta}\mathbf{v}^{\top}}{\|\mathbf{v}\|^{2}}$ satisfies this condition.
\end{proof}

\begin{lemma}[Output Controllability of Outer Element-wise Multiply] \label{lem:output_elem_multiply}
The function $g(\mathbf{v}; \mathbf{m}) = \mathbf{m} \odot \mathbf{v}$ is output-controllable if no element of $\mathbf{v}$ is zero.
\end{lemma}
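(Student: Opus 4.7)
The plan is to unfold the definition of output controllability directly and solve for the required $\Delta\mathbf{m}$ component-wise. Output controllability requires that for any fixed $\mathbf{v}$ and any target difference $\Delta\mathbf{y}$, there exist an update $\Delta\mathbf{m}$ satisfying $g(\mathbf{v}; \mathbf{m} + \Delta\mathbf{m}) - g(\mathbf{v}; \mathbf{m}) = \Delta\mathbf{y}$. For $g(\mathbf{v}; \mathbf{m}) = \mathbf{m} \odot \mathbf{v}$, I would first use the distributivity of the Hadamard product over vector addition in the second argument to rewrite the left-hand side as $(\mathbf{m} + \Delta\mathbf{m}) \odot \mathbf{v} - \mathbf{m} \odot \mathbf{v} = \Delta\mathbf{m} \odot \mathbf{v}$. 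The defining equation thus reduces to the purely algebraic condition $\Delta\mathbf{m} \odot \mathbf{v} = \Delta\mathbf{y}$.

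Next, I would solve this equation index-by-index: for each coordinate $i$, the requirement becomes the scalar equation $(\Delta\mathbf{m})_i \, v_i = (\Delta\mathbf{y})_i$. Under the hypothesis that every entry $v_i$ is nonzero, I can divide and set $(\Delta\mathbf{m})_i = (\Delta\mathbf{y})_i / v_i$, i.e., $\Delta\mathbf{m} = \Delta\mathbf{y} \oslash \mathbf{v}$, which is well-defined precisely because of the nonvanishing assumption. Substituting back verifies the required equality. I would also explicitly note that the hypothesis is not only sufficient but necessary: if some $v_i = 0$, then $(\Delta\mathbf{m})_i v_i = 0$ for any choice of $(\Delta\mathbf{m})_i$, so any $\Delta\mathbf{y}$ with $(\Delta\mathbf{y})_i \neq 0$ is unreachable.

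There is no real obstacle here; the proof is a one-line algebraic inversion. The only subtlety worth flagging is that, unlike the outer-bias or outer-weight cases in \cref{lem:output_bias} and \cref{lem:output_weight} which are globally controllable, element-wise multiply is only controllable under a genuine side condition on $\mathbf{v}$. This is exactly the source of the numerical fragility discussed around \cref{eq:delta_m} in \cref{sec:experiments}: entries of $\mathbf{v}$ (which plays the role of $\mathbf{h}_{\text{mlp}}$ in the Gemma block) that are close to zero force large $\Delta\mathbf{m}$ components and amplify low-precision error. I would close by remarking that this lemma is the abstract backbone of the $\Delta\mathbf{m}$ update used in \cref{thm:gemma_block}, confirming that the RMSNorm scale vector suffices to absorb the residual-path contextual change whenever the pre-scale activation is nowhere zero.
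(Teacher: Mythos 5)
Your proof is correct and matches the paper's own argument: both reduce the defining equation to $\Delta\mathbf{m}\odot\mathbf{v}=\boldsymbol{\delta}$ and solve by element-wise division $\Delta\mathbf{m}=\boldsymbol{\delta}\oslash\mathbf{v}$, valid exactly when no entry of $\mathbf{v}$ vanishes. Your extra remarks on necessity of the hypothesis and the link to the $\Delta\mathbf{m}$ instability are accurate but not needed for the result.
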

\begin{proof}
We need $(\mathbf{m}+\Delta\mathbf{m})\odot \mathbf{v} - \mathbf{m}\odot\mathbf{v} = \boldsymbol{\delta}$, which implies $\Delta\mathbf{m}\odot\mathbf{v} = \boldsymbol{\delta}$. If no element of $\mathbf{v}$ is zero, this can be solved with element-wise division: $\Delta \mathbf{m}=\boldsymbol{\delta}\oslash \mathbf{v}$.
\end{proof}

\begin{lemma}[Output Controllability of Mixture of Experts]
\label{lem:output_moe}
A Mixture of Experts (MoE) layer of the form $g(\mathbf{v}; \{\boldsymbol{\theta}_j\}, \mathbf{s}) = \sum_{j=1}^{N} s_j \cdot Ex_j(\mathbf{v}; \boldsymbol{\theta}_j)$, where $s_j$ are router gates and $Ex_j$ are expert networks, is output-controllable if each expert $Ex_j$ is output-controllable and the sum of gate values $S = \sum s_j \ne 0$.
\end{lemma}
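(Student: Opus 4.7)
The plan is to reduce the MoE statement to the output controllability already assumed for each individual expert, exploiting the linearity of the aggregation $\sum_j s_j \cdot Ex_j$. First I would write out the target identity: choosing an update $\Delta\boldsymbol{\theta}_j$ for each expert, we must obtain
\[
\sum_{j=1}^{N} s_j \bigl[Ex_j(\mathbf{v}; \boldsymbol{\theta}_j + \Delta\boldsymbol{\theta}_j) - Ex_j(\mathbf{v}; \boldsymbol{\theta}_j)\bigr] = \Delta\mathbf{y}.
\]
Denoting the per-expert change by $\boldsymbol{\delta}_j = Ex_j(\mathbf{v}; \boldsymbol{\theta}_j + \Delta\boldsymbol{\theta}_j) - Ex_j(\mathbf{v}; \boldsymbol{\theta}_j)$, the per-expert output controllability hypothesis says that for any choice of $\boldsymbol{\delta}_j$ we can realise it via some $\Delta\boldsymbol{\theta}_j$. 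The problem therefore collapses to selecting target deltas $\boldsymbol{\delta}_j$ that satisfy the single linear constraint $\sum_j s_j \boldsymbol{\delta}_j = \Delta\mathbf{y}$.

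Next I would exhibit the simplest solution, which matches the update stated in \cref{tab:results}: for each active expert ($s_j \ne 0$), set $\boldsymbol{\delta}_j = \Delta\mathbf{y}/S$ where $S = \sum_j s_j$; for inactive experts set $\boldsymbol{\delta}_j$ arbitrarily (e.g., $\mathbf{0}$). A direct substitution gives $\sum_j s_j \boldsymbol{\delta}_j = \bigl(\sum_{j : s_j \ne 0} s_j\bigr) \cdot \Delta\mathbf{y}/S = \Delta\mathbf{y}$, where the division by $S$ is well-defined precisely because of the nonzero-sum hypothesis. Invoking per-expert output controllability then produces the actual parameter updates $\Delta\boldsymbol{\theta}_j$ that realise each target $\boldsymbol{\delta}_j$, completing the construction.

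There is no genuine obstacle in this proof; the only step that warrants some care is the sparse routing scenario used in practice (e.g., Mixtral), where most $s_j$ are identically zero. Inactive experts contribute nothing to the output regardless of their parameters, so the sum $S$ must be interpreted as running over active experts only, and the nonzero-sum assumption is exactly what guarantees a non-degenerate distribution of the correction. It is worth noting in passing that this construction is highly non-unique: any solution of the affine equation $\sum_j s_j \boldsymbol{\delta}_j = \Delta\mathbf{y}$ suffices, so one could alternatively weight expert contributions by gate magnitude or other criteria, which might be relevant for the numerical stability considerations discussed in \cref{sec:experiments}.
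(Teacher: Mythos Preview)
Your proposal is correct and follows essentially the same approach as the paper: distribute the desired output change uniformly as $\boldsymbol{\delta}_j = \Delta\mathbf{y}/S$ across the experts, verify the weighted sum $\sum_j s_j \boldsymbol{\delta}_j = \Delta\mathbf{y}$, and invoke per-expert output controllability to realise each $\boldsymbol{\delta}_j$. Your additional remarks on the active-versus-inactive distinction in sparse routing and on the non-uniqueness of the construction go beyond the paper's terse proof but are sound and do not alter the argument.
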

\begin{proof}
Let the desired output change be $\boldsymbol{\delta}$. We distribute this change across the experts, setting the target change for each expert $j$ to be $\boldsymbol{\delta}_j = \boldsymbol{\delta} / S$. Since each expert $Ex_j$ is output-controllable by assumption, a parameter update $\Delta\boldsymbol{\theta}_j$ exists to produce this change. The total change in the MoE output is then $\sum s_j \cdot (\boldsymbol{\delta} / S) = (\sum s_j) \cdot (\boldsymbol{\delta}/S) = S \cdot (\boldsymbol{\delta}/S) = \boldsymbol{\delta}$.
\end{proof}

\begin{lemma}[Implicit Updates in Parallel Blocks]
\label{lem:parallel_block}
For a parallel transformer block of the form $T(C,\mathbf{x}) = \mathbf{x} + A(C,\mathbf{x}) + g(f(\mathbf{x}); \boldsymbol{\theta}_g)$, a perfect implicit weight update exists if the outer function $g$ is output-controllable.
\end{lemma}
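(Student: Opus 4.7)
The plan is to exploit the defining structural property of the parallel block: because the MLP branch $g(f(\mathbf{x}); \boldsymbol{\theta}_g)$ takes only the original input $\mathbf{x}$ as its argument---not the attention output---its value under the original parameters is completely independent of the context. All of the contextual variation is therefore concentrated in the single term $A(C,\mathbf{x})$, so no input-side correction for $f$ is needed at all; only an output-side correction for $g$ is required. This immediately explains why output controllability of $g$ alone suffices as a hypothesis, without any assumption on $f$.

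Concretely, the key steps I would carry out are as follows. First, set $\mathbf{v} = A(C\setminus Y,\mathbf{x})$ and let $\Delta A_{\mathbf{x}}(Y) = A(C,\mathbf{x}) - \mathbf{v}$, so that
\[
T(C,\mathbf{x}) \;=\; \mathbf{x} + \mathbf{v} + \Delta A_{\mathbf{x}}(Y) + g(f(\mathbf{x}); \boldsymbol{\theta}_g).
\]
Second, write the target expression for the modified block evaluated on the reduced context:
\[
T'(C\setminus Y,\mathbf{x}) \;=\; \mathbf{x} + \mathbf{v} + g(f(\mathbf{x}); \boldsymbol{\theta}'_g).
\]
Third, equate the two and cancel the context-independent summands $\mathbf{x}+\mathbf{v}$ (noting crucially that $f(\mathbf{x})$ is identical on both sides since neither $f$ nor $\mathbf{x}$ depends on $C$), reducing the equality to the single requirement
\[
g(f(\mathbf{x}); \boldsymbol{\theta}_g + \Delta\boldsymbol{\theta}_g) - g(f(\mathbf{x}); \boldsymbol{\theta}_g) \;=\; \Delta A_{\mathbf{x}}(Y).
\]
Fourth, recognize this as precisely the definition of output controllability of $g$ at the fixed input $f(\mathbf{x})$ with desired output shift $\Delta A_{\mathbf{x}}(Y)$, and invoke the hypothesis to conclude the existence of the required $\Delta\boldsymbol{\theta}_g$.

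I expect no substantial obstacle; the argument is strictly easier than that of \cref{thm:unified_residual} because the parallel topology removes the need for any input-controllability step. The only mild caveat worth flagging is that when this lemma is instantiated with concrete output-controllable families from \cref{tab:results} (for instance an element-wise scale, an outer weight matrix, or a mixture of experts), the intermediate vector $f(\mathbf{x})$ must satisfy the corresponding non-degeneracy precondition of that construction (no zero entries, or $f(\mathbf{x})\neq\mathbf{0}$); but this is inherited from the cited output-controllability lemmas rather than something new to verify here.
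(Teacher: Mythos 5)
Your proposal is correct and follows essentially the same route as the paper's proof: both cancel the context-independent terms (including the unchanged $f(\mathbf{x})$), reduce the equality to $g(f(\mathbf{x}); \boldsymbol{\theta}'_g) - g(f(\mathbf{x}); \boldsymbol{\theta}_g) = \Delta A_{\mathbf{x}}(Y)$, and invoke output controllability of $g$. Your added remark about inheriting the non-degeneracy preconditions when instantiating $g$ from \cref{tab:results} is a reasonable clarification but not a departure from the paper's argument.
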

\begin{proof}
In this architecture, the MLP branch $g(f(\mathbf{x}))$ is context-independent. Noting that the input $\mathbf{x}$ cancels from both sides of the block equation, the entire contextual difference from the parallel attention branch, $\Delta A_\mathbf{x}(Y)$, must be absorbed by the MLP branch. For the outputs to be equal, we require:
$A(C,\mathbf{x}) + g(f(\mathbf{x}); \boldsymbol{\theta}_g) = A(C \setminus Y, \mathbf{x}) + g(f(\mathbf{x}); \boldsymbol{\theta}'_g)$.
This simplifies to $g(f(\mathbf{x}); \boldsymbol{\theta}'_g) - g(f(\mathbf{x}); \boldsymbol{\theta}_g) = \Delta A_\mathbf{x}(Y)$. This is the definition of output controllability for $g$.
\end{proof}

\section{Appendix: Numerically Stable Update and RMSNorm Inversion}
\label{app:numerically-stable-update}

\subsection{Numerically Stable Update via RMSNorm Inversion}

The direct update for $\Delta\mathbf{m}$ derived in \cref{thm:gemma_block} can be numerically unstable, as it involves element-wise division by the MLP's output, which may contain values at or near zero. The ``Stable'' update referenced in \cref{sec:experiments} mitigates this by primarily updating the $W_{\text{down}}$ matrix, using $\Delta\mathbf{m}$ only to absorb any minor remaining error.

This method works by inverting the final $\mathbf{m} \odot \text{Norm}(\cdot)$ operation. Let the inputs to the $W_{\text{down}}$ layer (after the $W_{\text{gate/up}}$ updates are applied) be $\mathbf{h}_{\text{gated}, C}$. Let the original pre-normalization vector be $\mathbf{h}_{\text{down}, C} = W_{\text{down}}\mathbf{h}_{\text{gated}, C}$, and the original scaled, normalized output be $\mathbf{h}_{\text{out}, C} = \mathbf{m} \odot \text{Norm}(\mathbf{h}_{\text{down}, C})$.

The goal is to find updates $\Delta W_{\text{down}}$ and $\Delta\mathbf{m}$ such that the new output component matches the target $\mathbf{g} = (\mathbf{v}_C - \mathbf{v}) + \mathbf{h}_{\text{out}, C}$.

The process is as follows:

\begin{enumerate}
    \item \textbf{Find Target Pre-Norm Vector.} We first find an optimal pre-normalization vector $\mathbf{h}_{\text{target}}$ that, when normalized and scaled, best approximates $\mathbf{g}$. This is achieved using the analytical RMSNorm inversion derived in \cref{app:rms-inversion}. We set the target RMS to the original RMS value, $C = \text{RMS}(\mathbf{h}_{\text{down}, C})$.
    $$
    \mathbf{h}_{\text{target}} = \text{InvertRMSNorm}(\mathbf{g}, \mathbf{m}, C)
    $$
    This function finds the $\mathbf{h}_{\text{target}}$ that minimizes $\|\mathbf{m} \odot \text{Norm}(\mathbf{h}_{\text{target}}) - \mathbf{g}\|^2$ under the constraint $\text{RMS}(\mathbf{h}_{\text{target}}) = C$.
    \footnote{We note that other choices for $\mathbf{h}_\text{target}$ are also possible such as $\mathbf{h}_\text{target} = \alpha \cdot \mathbf{g}\oslash \mathbf{m}$ scaled to $\text{RMS}(\mathbf{h}_\text{target})=C$, this variant is shown in \cref{app:experiments}.}

    \item \textbf{Update $W_{\text{down}}$.} We compute a rank-1 update $\Delta W_{\text{down}}$ to absorb the difference $\boldsymbol{\delta} = \mathbf{h}_{\text{target}} - \mathbf{h}_{\text{down}, C}$, ensuring the $W_{\text{down}}$ layer now outputs $\mathbf{h}_{\text{target}}$.
    $$
    \Delta W_{\text{down}} = \frac{\boldsymbol{\delta} \cdot \mathbf{h}_{\text{gated}, C}^\top}{\|\mathbf{h}_{\text{gated}, C}\|^2}
    $$
    
    \item \textbf{Calculate Remainder and Update $\mathbf{m}$.} The inversion in Step 1 is an L2-minimizing approximation, not necessarily an exact match. Let the new pre-norm vector be $\mathbf{h}'_{\text{down}} = (W_{\text{down}} + \Delta W_{\text{down}}) \mathbf{h}_{\text{gated}, C} = \mathbf{h}_{\text{target}}$. Let its normalized form be $\mathbf{h}'_{\text{norm}} = \text{Norm}(\mathbf{h}'_{\text{down}})$.
    
    The remaining error is $\mathbf{r} = \mathbf{g} - (\mathbf{m} \odot \mathbf{h}'_{\text{norm}})$. We absorb this small remainder with $\Delta\mathbf{m}$:
    $$
    (\mathbf{m} + \Delta\mathbf{m}) \odot \mathbf{h}'_{\text{norm}} = \mathbf{g} \implies \Delta\mathbf{m} = \mathbf{r} \oslash \mathbf{h}'_{\text{norm}}
    $$
    This final division is more stable because the numerator $\mathbf{r}$ is expected to be very small, counteracting the impact of small values in $\mathbf{h}'_\text{norm}$.
\end{enumerate}

\subsection{Derivation of Analytical RMSNorm Inversion}
\label{app:rms-inversion}

We seek to find a vector $\mathbf{x} \in \mathbb{R}^n$ that minimizes the squared $L_2$ error to a target vector $\mathbf{g}$, after applying scaled RMS normalization, subject to a fixed RMS value.

\begin{definition}[Scaled RMSNorm]
The scaled RMSNorm function is defined as:
$$
\text{RMSNorm}(\mathbf{x}, \mathbf{m}) = \left(\frac{\mathbf{x}}{\text{RMS}(\mathbf{x})}\right) \odot \mathbf{m}
$$
where $\text{RMS}(\mathbf{x}) = \sqrt{\frac{1}{n}\sum_{i=1}^n x_i^2} = \frac{\|\mathbf{x}\|}{\sqrt{n}}$.
\end{definition}

\subsubsection{Problem Formulation}
The objective is to find $\mathbf{x}$ that minimizes
$$
L(\mathbf{x}) = \left\| \left(\frac{\mathbf{x}}{\text{RMS}(\mathbf{x})}\right) \odot \mathbf{m} - \mathbf{g} \right\|^2
$$
subject to the constraint $\text{RMS}(\mathbf{x}) = C$ for a known constant $C > 0$. This constraint reduces from an infinite set of solutions to a single one.

To simplify, let $\mathbf{y} = \mathbf{x} / C$. Then $\text{RMS}(\mathbf{y}) = 1$, and $\mathbf{x} = C \mathbf{y}$. Substituting this into the objective yields an equivalent problem: find the vector $\mathbf{y}$ that minimizes
\begin{equation}
f(\mathbf{y}) = \| \mathbf{y} \odot \mathbf{m} - \mathbf{g} \|^2 = \sum_{k=1}^n (y_k m_k - g_k)^2
\label{eq:app_objective}
\end{equation}
subject to the constraint
\begin{equation}
h(\mathbf{y}) = \text{RMS}(\mathbf{y})^2 - 1 = \frac{1}{n}\sum_{k=1}^n y_k^2 - 1 = 0
\label{eq:app_constraint}
\end{equation}

\subsubsection{Solution via Lagrange Multipliers}
We solve this constrained optimization problem using the method of Lagrange multipliers. The Lagrangian function $\mathcal{L}(\mathbf{y}, \lambda)$ is:
\begin{align*}
\mathcal{L}(\mathbf{y}, \lambda) &= f(\mathbf{y}) - \lambda h(\mathbf{y}) \\
&= \sum_{k=1}^n (y_k m_k - g_k)^2 - \lambda \left( \frac{1}{n}\sum_{k=1}^n y_k^2 - 1 \right)
\end{align*}
To find the optimal $\mathbf{y}$, we set the gradient of $\mathcal{L}$ with respect to each component $y_k$ to zero:
$$
\frac{\partial \mathcal{L}}{\partial y_k} = \frac{\partial}{\partial y_k} (y_k m_k - g_k)^2 - \lambda \frac{\partial}{\partial y_k} \left( \frac{1}{n}y_k^2 \right) = 0
$$
Using the chain rule:
$$
2(y_k m_k - g_k) \cdot m_k - \lambda \left(\frac{2y_k}{n}\right) = 0
$$
Dividing by 2 and rearranging to solve for $y_k$:
$$
y_k m_k^2 - g_k m_k = \frac{\lambda}{n} y_k
$$
$$
y_k \left( m_k^2 - \frac{\lambda}{n} \right) = g_k m_k
$$
Letting $\mu = \lambda/n$, we find the form of the solution for each component:
\begin{equation}
y_k = \frac{g_k m_k}{m_k^2 - \mu}
\label{eq:app_solution_form}
\end{equation}
The scalar $\mu$ is a constant related to the Lagrange multiplier, which must be chosen to satisfy the constraint $h(\mathbf{y}) = 0$.

\subsubsection{Finding the Multiplier $\mu$}
We find $\mu$ by substituting the solution form from \cref{eq:app_solution_form} back into the constraint \cref{eq:app_constraint}:
$$
\frac{1}{n}\sum_{k=1}^n \left( \frac{g_k m_k}{m_k^2 - \mu} \right)^2 = 1
$$
Thus, $\mu$ must be the root of the function $F(\mu) = 0$, where:
\begin{equation}
F(\mu) = \left( \frac{1}{n}\sum_{k=1}^n \frac{(g_k m_k)^2}{(m_k^2 - \mu)^2} \right) - 1
\label{eq:app_mu_function}
\end{equation}
To guarantee a unique solution, we analyze $F(\mu)$ on the interval $\mathcal{I} = (-\infty, \min_k(m_k^2))$. This interval ensures the denominator $(m_k^2 - \mu)$ is always positive and non-zero.

\paragraph{1. Existence of a Root.}
We check the limits of $F(\mu)$ at the boundaries of $\mathcal{I}$:
\begin{itemize}
    \item As $\mu \to -\infty$, the denominator $(m_k^2 - \mu)^2 \to +\infty$ for all $k$, so each term in the sum approaches 0. Thus, $\lim_{\mu \to -\infty} F(\mu) = -1$.
    \item As $\mu \to (\min_k m_k^2)^-$, at least one denominator term $(m_k^2 - \mu)^2 \to 0^+$, causing the sum to diverge. Thus, $\lim_{\mu \to (\min m_k^2)^-} F(\mu) = +\infty$.
\end{itemize}
Since $F(\mu)$ is continuous on $\mathcal{I}$ and transitions from a negative to a positive value, the Intermediate Value Theorem guarantees that at least one root exists in this interval.

\paragraph{2. Uniqueness of the Root.}
We show the root is unique by proving $F(\mu)$ is strictly monotonic on $\mathcal{I}$. We analyze its derivative, $F'(\mu)$:
\begin{align*}
F'(\mu) &= \frac{d}{d\mu} \left[ \left( \frac{1}{n}\sum_{k=1}^n (g_k m_k)^2 (m_k^2 - \mu)^{-2} \right) - 1 \right] \\
&= \frac{1}{n}\sum_{k=1}^n (g_k m_k)^2 \cdot \left( -2(m_k^2 - \mu)^{-3} \cdot (-1) \right) \\
&= \frac{2}{n}\sum_{k=1}^n \frac{(g_k m_k)^2}{(m_k^2 - \mu)^3}
\end{align*}
On the interval $\mathcal{I}$, we have $m_k^2 - \mu > 0$ for all $k$. Therefore, $(g_k m_k)^2 \ge 0$ and $(m_k^2 - \mu)^3 > 0$. Assuming a non-trivial case where not all $g_k m_k = 0$, the derivative $F'(\mu)$ is a sum of positive terms, so $F'(\mu) > 0$.

Since $F(\mu)$ is strictly monotonically increasing on $\mathcal{I}$, it can cross the axis only once. Thus, a unique root $\mu$ exists and can be found efficiently using a numerical method such as bisection search.

Once $\mu$ is found, the optimal normalized vector $\mathbf{y}$ is given by \cref{eq:app_solution_form}, and the final unnormalized vector $\mathbf{x}$ is recovered by scaling by the goal RMS norm $C$: $\mathbf{x} = C \cdot \mathbf{y}$.

\section{Extended Experimental Results}
\label{app:experiments}

In this section, we provide a comprehensive breakdown of the experimental validation across different prompts, metric categories, and ablation studies regarding layer selection and scaling updates.

\subsection{Additional Prompts}
\label{app:experiments_prompts}

To ensure the robustness of our findings beyond the primary ``Mars weather'' example, we evaluated the update mechanism on four additional distinct prompts ranging from creative writing to analytical tasks. \cref{fig:additional_prompts_1_2} and \cref{fig:additional_prompts_3_4} show the generation metrics for these prompts. In all cases using \texttt{float32}, we observe near-zero Logit Difference and Total Variation Distance.

\begin{figure*}[ht!]
    \centering
    \begin{subfigure}[b]{0.47\textwidth}
        \centering
        \includegraphics[width=\linewidth]{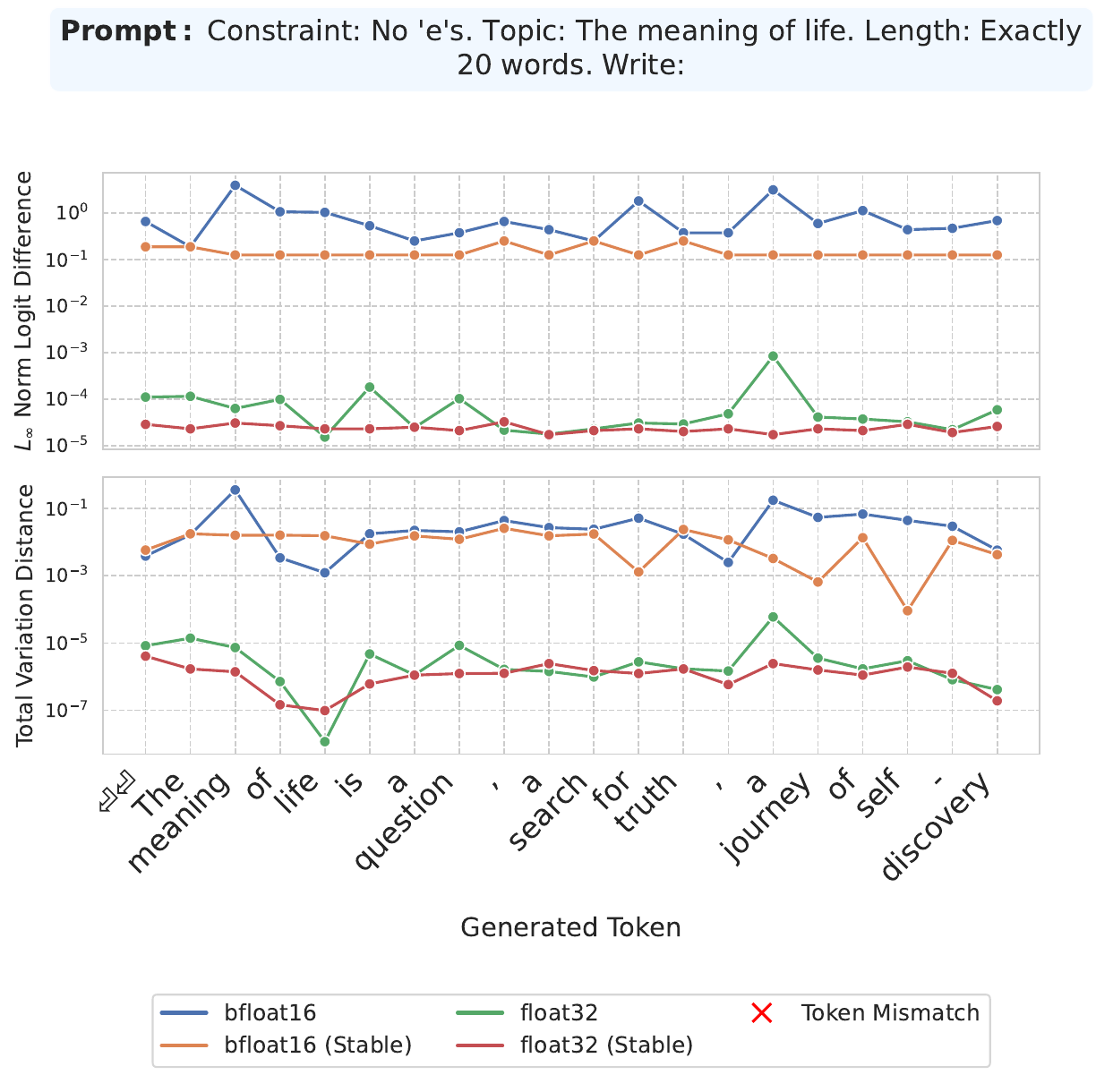}
        \caption{Prompt 1: Constrained generation (note that the original model also ignores the constraint).}
        \label{fig:prompt1}
    \end{subfigure}
    \hfill
    \begin{subfigure}[b]{0.48\textwidth}
        \centering
        \includegraphics[width=\linewidth]{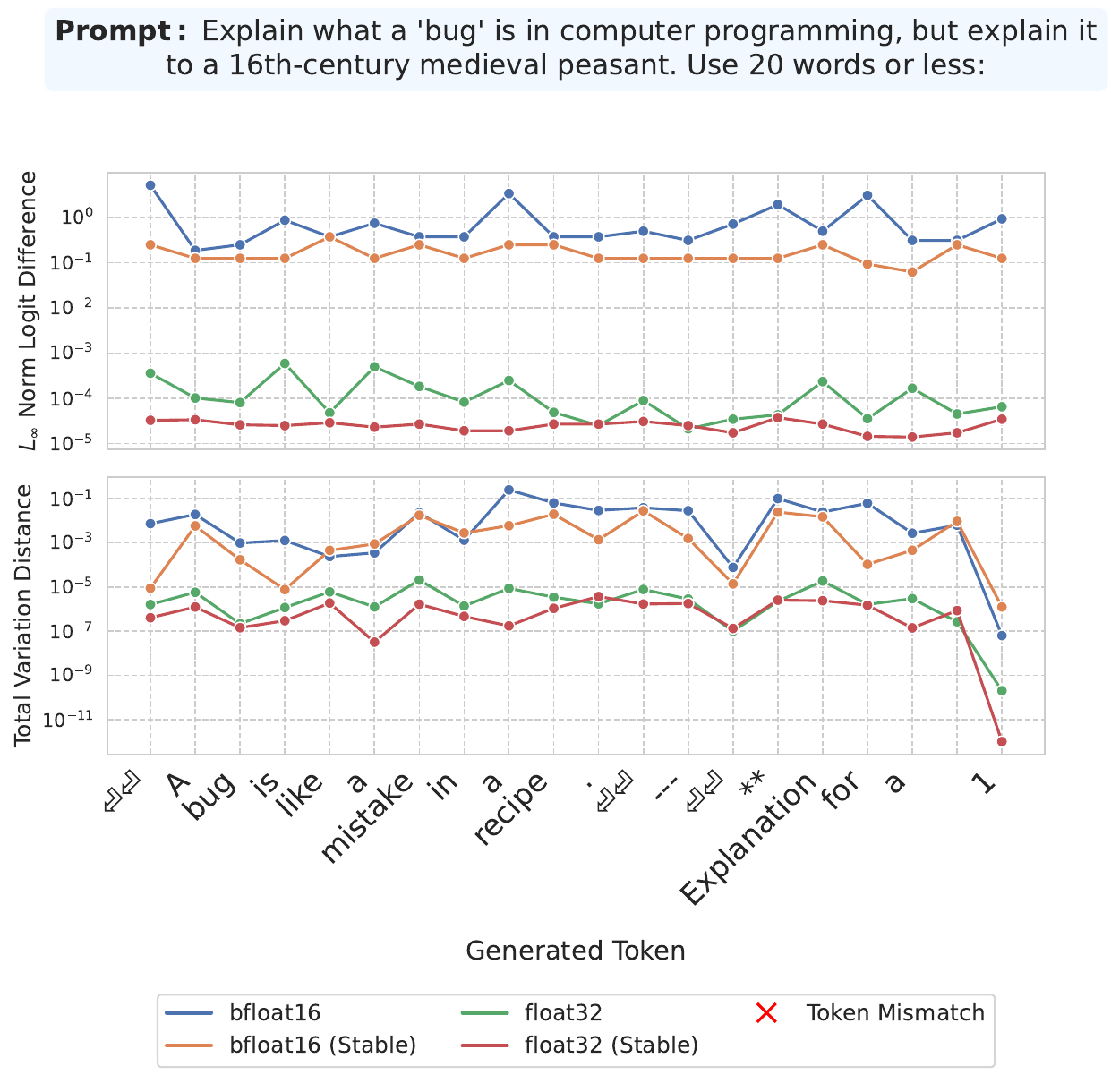}
        \caption{Prompt 2: Analogy creation\\ ~}
        \label{fig:prompt2}
    \end{subfigure}
    \caption{Generation metrics for Prompts 1 and 2. The updated model (no context) maintains high fidelity to the original model (with context) across different textual domains.}
    \label{fig:additional_prompts_1_2}
\end{figure*}

\begin{figure*}[ht!]
    \centering
    \begin{subfigure}[b]{0.45\textwidth}
        \centering
        \includegraphics[width=\linewidth]{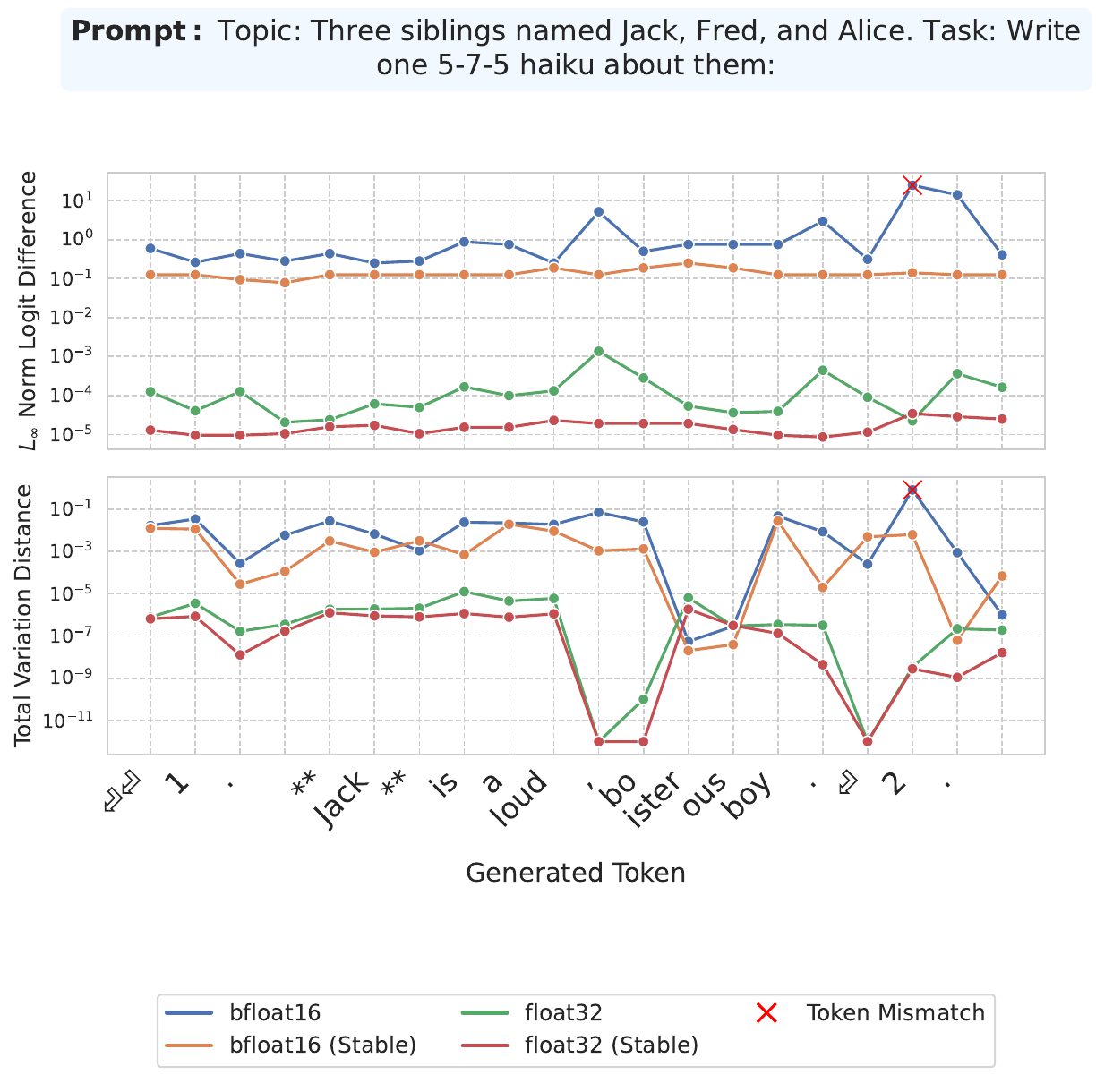}
        \caption{Prompt 3: Poem writing}
        \label{fig:prompt3}
    \end{subfigure}
    \hfill
    \begin{subfigure}[b]{0.48\textwidth}
        \centering
        \includegraphics[width=\linewidth]{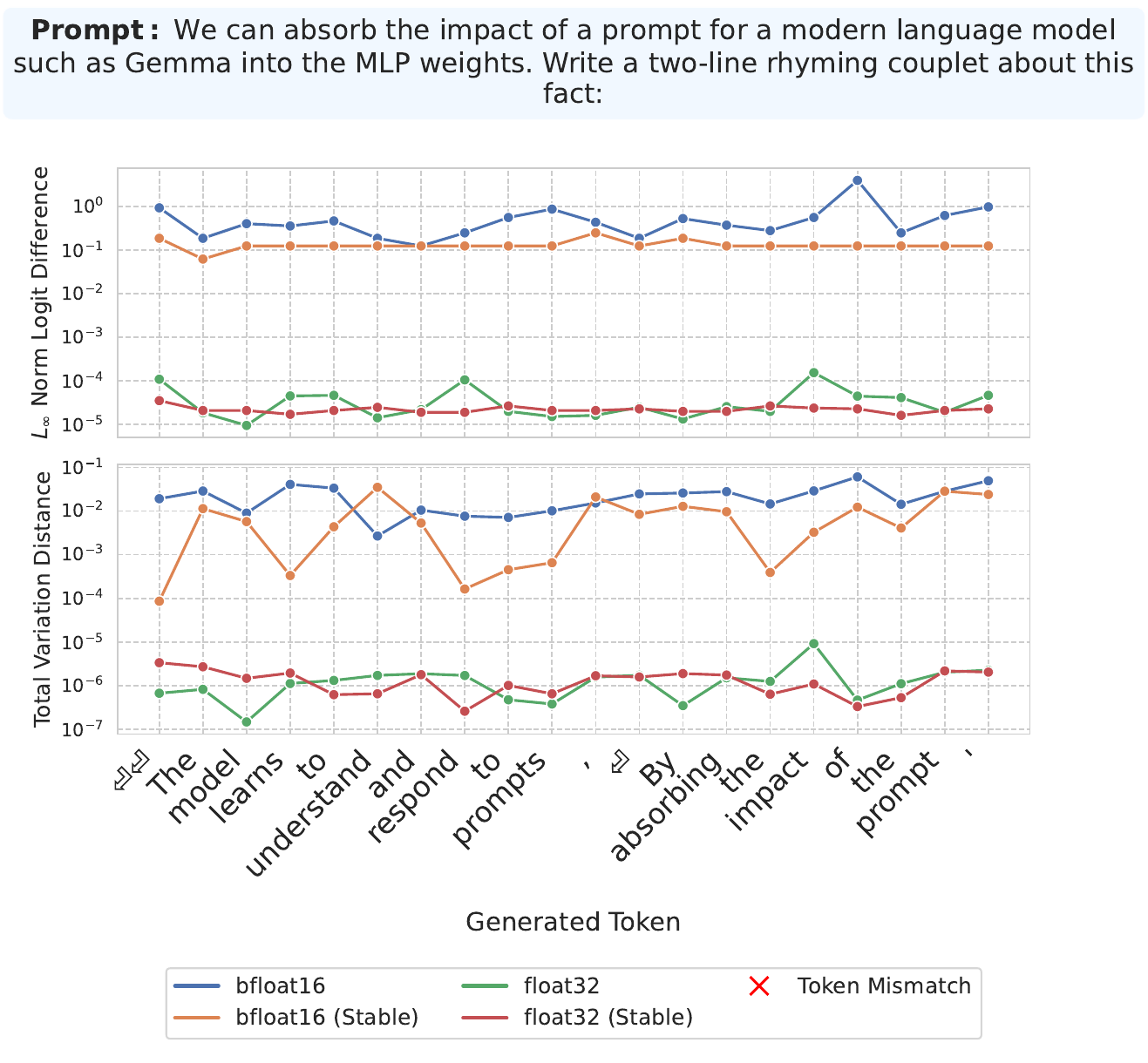}
        \caption{Prompt 4: Poem writing}
        \label{fig:prompt4}
    \end{subfigure}
    \caption{Generation metrics for Prompts 3 and 4. The updated model (no context) maintains high fidelity to the original model (with context) across different textual domains.}
    \label{fig:additional_prompts_3_4}
\end{figure*}

\subsection{Extended Metrics Analysis}
\label{app:extended_metrics}

Beyond the standard accuracy and TVD reported in the main text, we analyze the structural impact of the updates on the model parameters and output distributions.

\cref{fig:metrics_detailed} displays two distinct views. \cref{fig:metrics_dist} illustrates distance metrics including Hellinger distance and Rank distance, confirming that the probability landscapes remain aligned. \cref{fig:metrics_norms} tracks the Frobenius norms of the weight updates and the L2 norms of the vector updates across layers, showing that the required patches are generally sparse and low-magnitude. We define these metrics below.

\begin{itemize}[noitemsep]
    \item \textbf{Hellinger distance:} The Hellinger distance is defined as $H(p, q) = \frac{1}{\sqrt{2}}\|\sqrt{p} - \sqrt{q}\|_2$
    \item \textbf{Rank distance:} The rank distance is the Pearson correlation between logit orderings $\rho(p, q) = \frac{\text{cov}(R(p), R(q))}{\sigma_{R(p)} \sigma_{R(q)}}$, where $R$ is the rank vector $R(v)_i = |\{j : (v_j,j) < (v_i,i)\}|$ and $\sigma$ is the standard deviation.
    \item \textbf{Top-1 difference:} The top-1 difference is the difference between the probability of the next predicted token (according to the original model) under the two settings. $\text{Top-1-Diff}(p, q) = \max p - q_{\text{argmax } p}$.
    \item \textbf{Matrix update norm:} The Frobenius norm of weight matrix updates across all layers
    \item \textbf{Vector update norm:} The L2 norm of vector updates (scale vector in RMSNorm) across all layers
\end{itemize}

\begin{figure*}[ht!]
    \centering
    \begin{subfigure}[b]{1.0\textwidth}
        \centering
        \includegraphics[width=\linewidth]{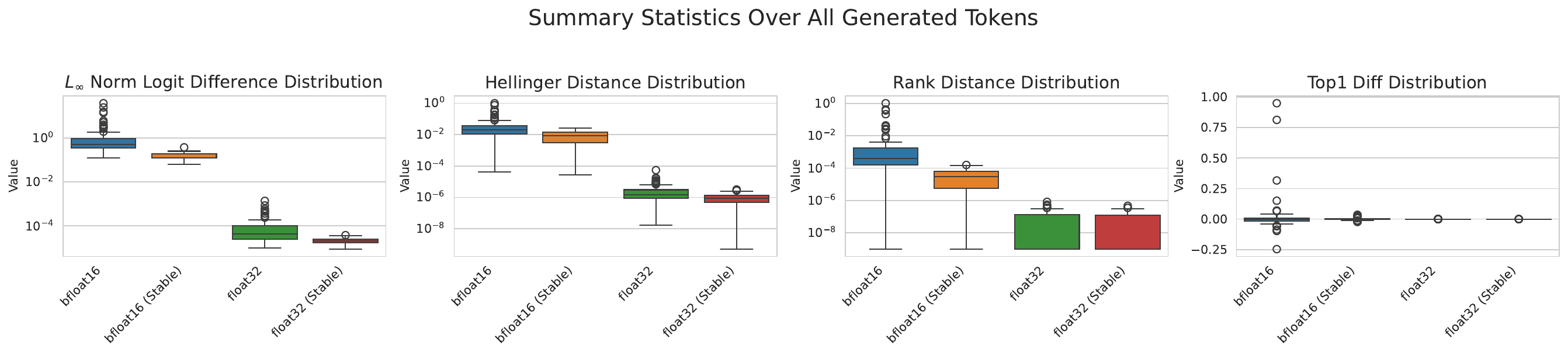}
        \caption{Distance Metrics (Hellinger, Rank Correlation)}
        \label{fig:metrics_dist}
    \end{subfigure}
    \hfill
    \begin{subfigure}[b]{1.0\textwidth}
        \centering
        \includegraphics[width=\linewidth]{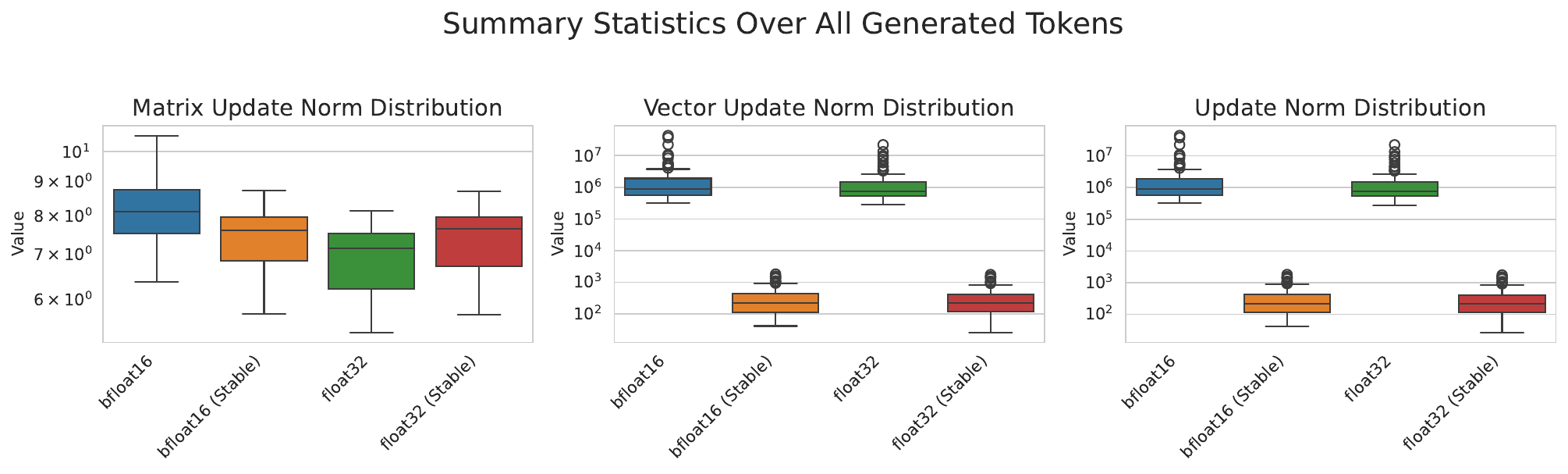}
        \caption{Update Norms (Matrix Frobenius, Vector L2)}
        \label{fig:metrics_norms}
    \end{subfigure}
    \caption{Detailed analysis of generation divergence and parameter update magnitudes. The low distance metrics confirm distribution matching, while the norm plots indicate that the minimization of the norm in \cref{app:numerically-stable-update} results in a much smaller update}
    \label{fig:metrics_detailed}
\end{figure*}

\subsection{Empirical Validation on Falcon}
\label{app:falcon_results}

We replicated the evaluation suite on the Falcon-7B architecture to verify the model-agnostic nature of our controllability framework.

\paragraph{Architectural Note.} Falcon utilizes a pre-norm architecture without post-normalization scaling, which simplifies the controllability update procedure compared to Gemma-style blocks. Specifically, the update does not require the approximate RMSNorm inversion derived in \cref{app:numerically-stable-update}. Consequently, the implicit updates on Falcon are inherently more numerically stable, as they avoid the "exploding" updates seen in Gemma when operating in low-precision formats.

\paragraph{Experimental Results.} \cref{fig:falcon_summary} presents the aggregated metrics for Falcon across all prompts, comparing \texttt{float32} and \texttt{bfloat16} precision. Unlike the Gemma results, Falcon achieves perfect token-matching even in base \texttt{bfloat16} without requiring stabilization techniques. The updated models consistently achieve token-level fidelity and negligible logit drift.

\begin{figure}[h!]
    \centering
    \includegraphics[width=1.0\textwidth]{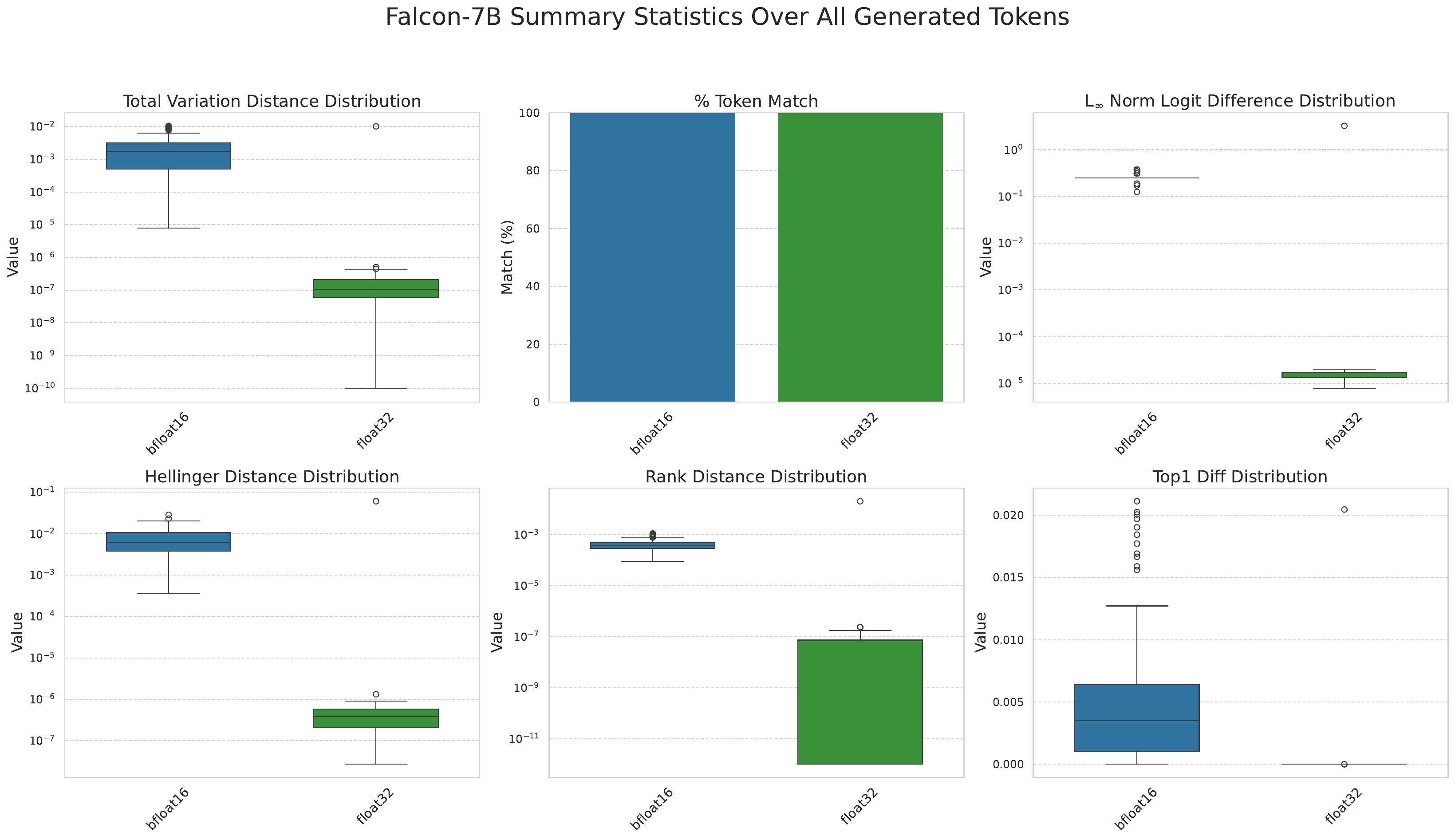}
    \caption{Falcon-7B performance metrics aggregated across all prompts and precision settings. The combined grid displays distributions for TVD, Token Match percentage, $L_\infty$ Logit difference, Hellinger distance, Rank distance, and Top-1 difference.}
    \label{fig:falcon_summary}
\end{figure}


\begin{figure}[p]
    \centering
    \begin{subfigure}{0.48\textwidth}
        \centering
        \includegraphics[width=\linewidth]{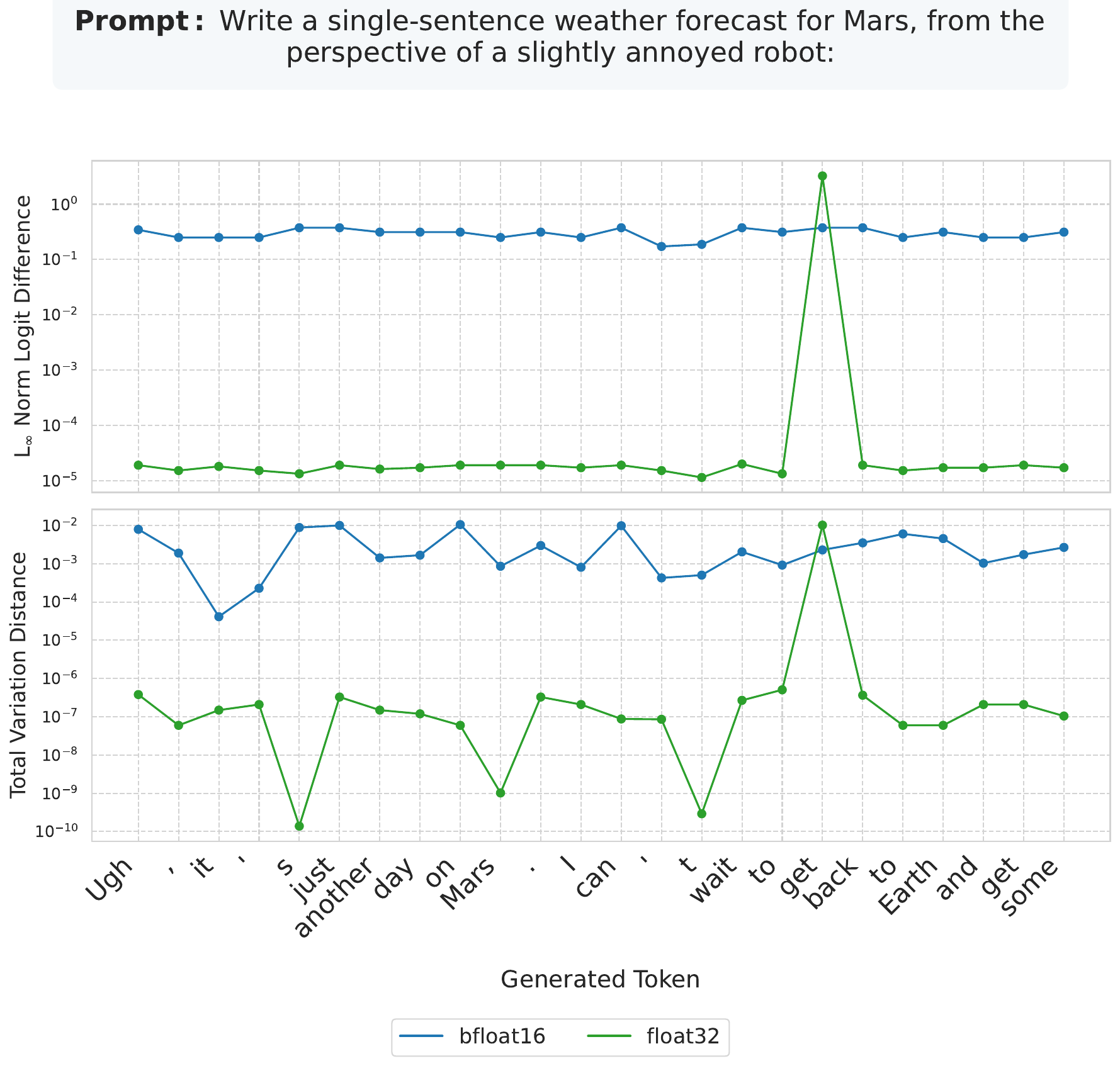}
        \caption{Mars Weather Forecast (Annoyed Robot)}
    \end{subfigure}
    \hfill
    \begin{subfigure}{0.48\textwidth}
        \centering
        \includegraphics[width=\linewidth]{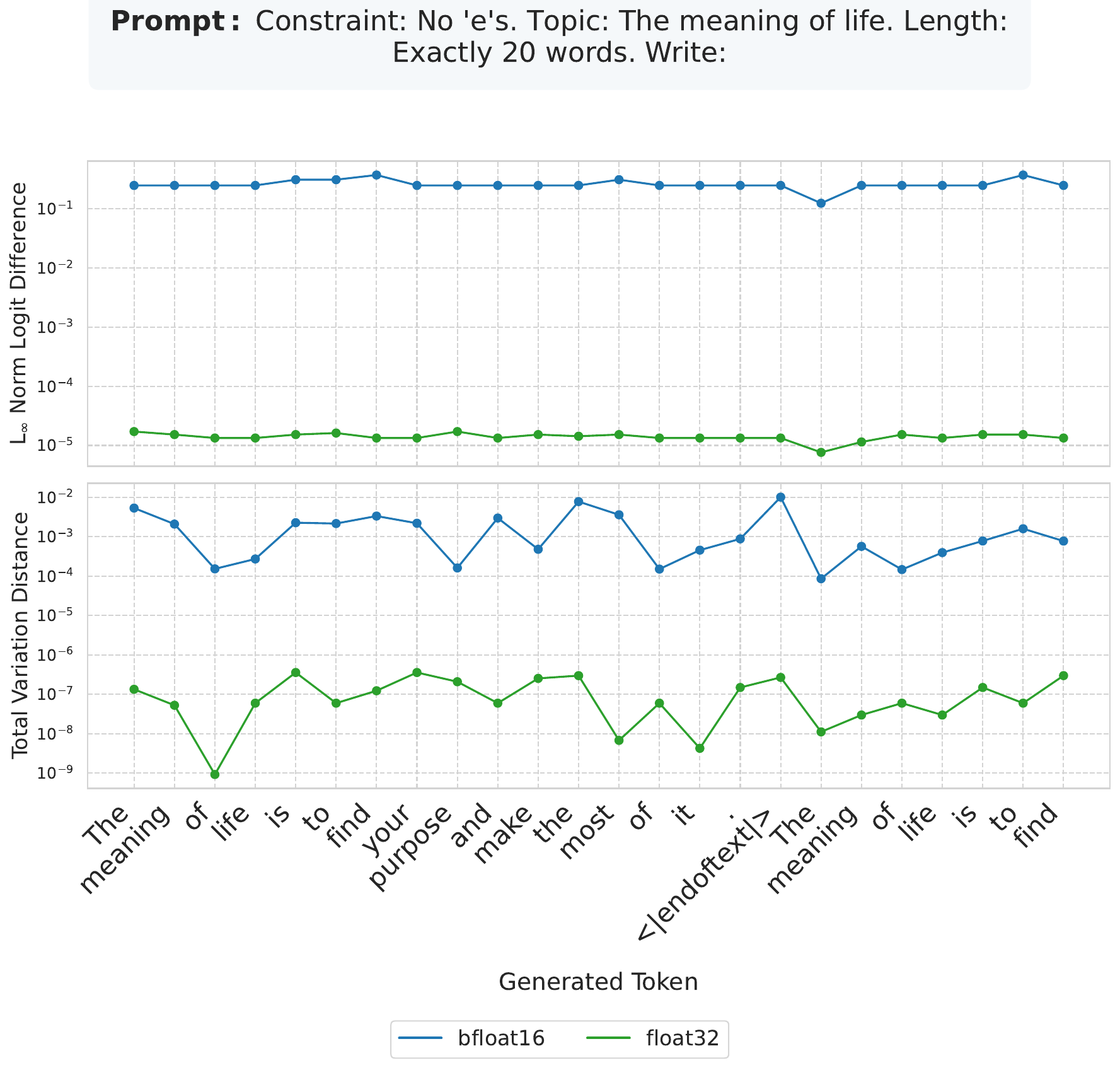}
        \caption{Meaning of Life (Constraint: No 'e')}
    \end{subfigure}
    
    \vspace{1em}
    
    \begin{subfigure}{0.48\textwidth}
        \centering
        \includegraphics[width=\linewidth]{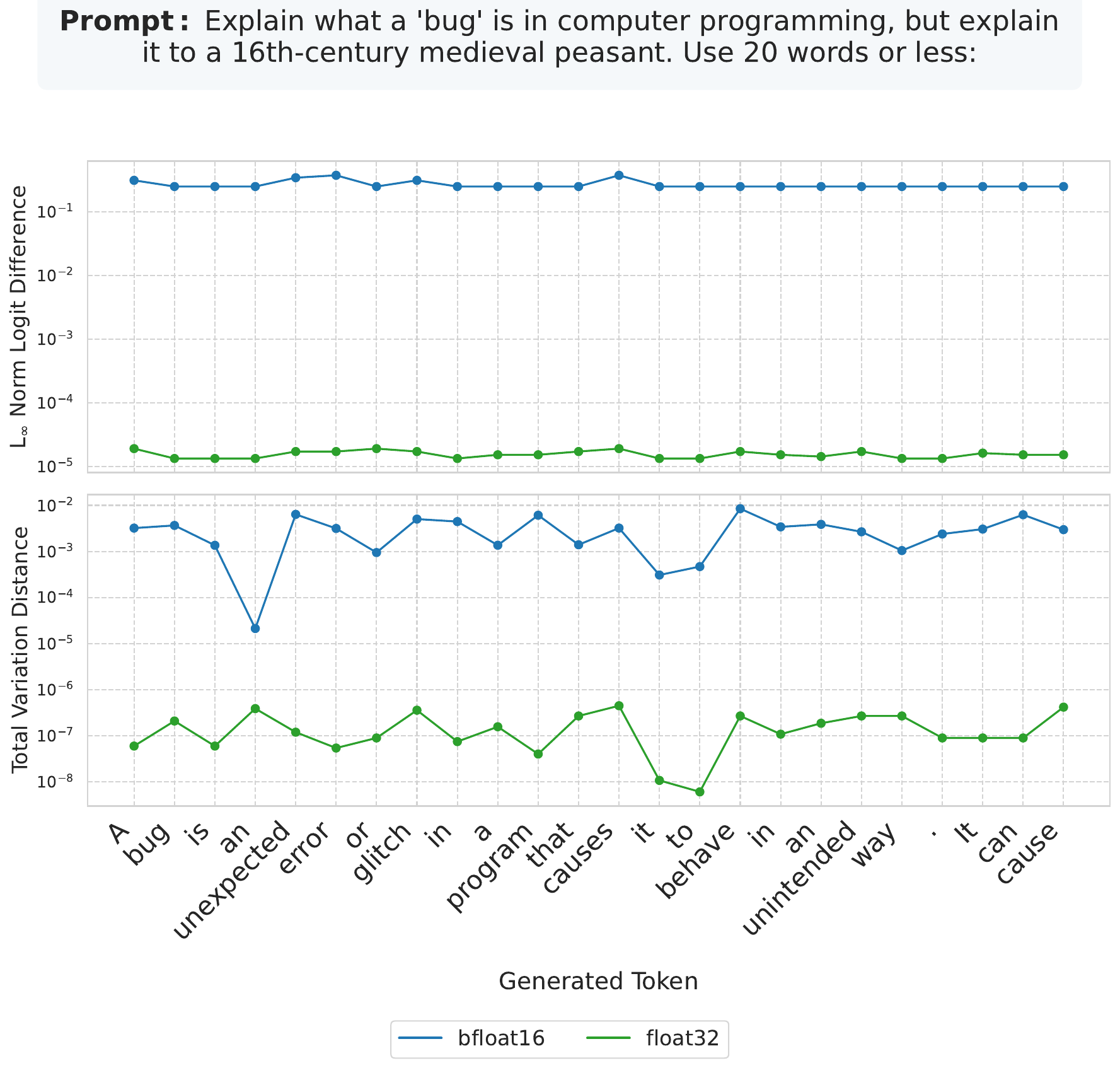}
        \caption{Explain 'Bug' (Medieval Peasant)}
    \end{subfigure}
    \hfill
    \begin{subfigure}{0.48\textwidth}
        \centering
        \includegraphics[width=\linewidth]{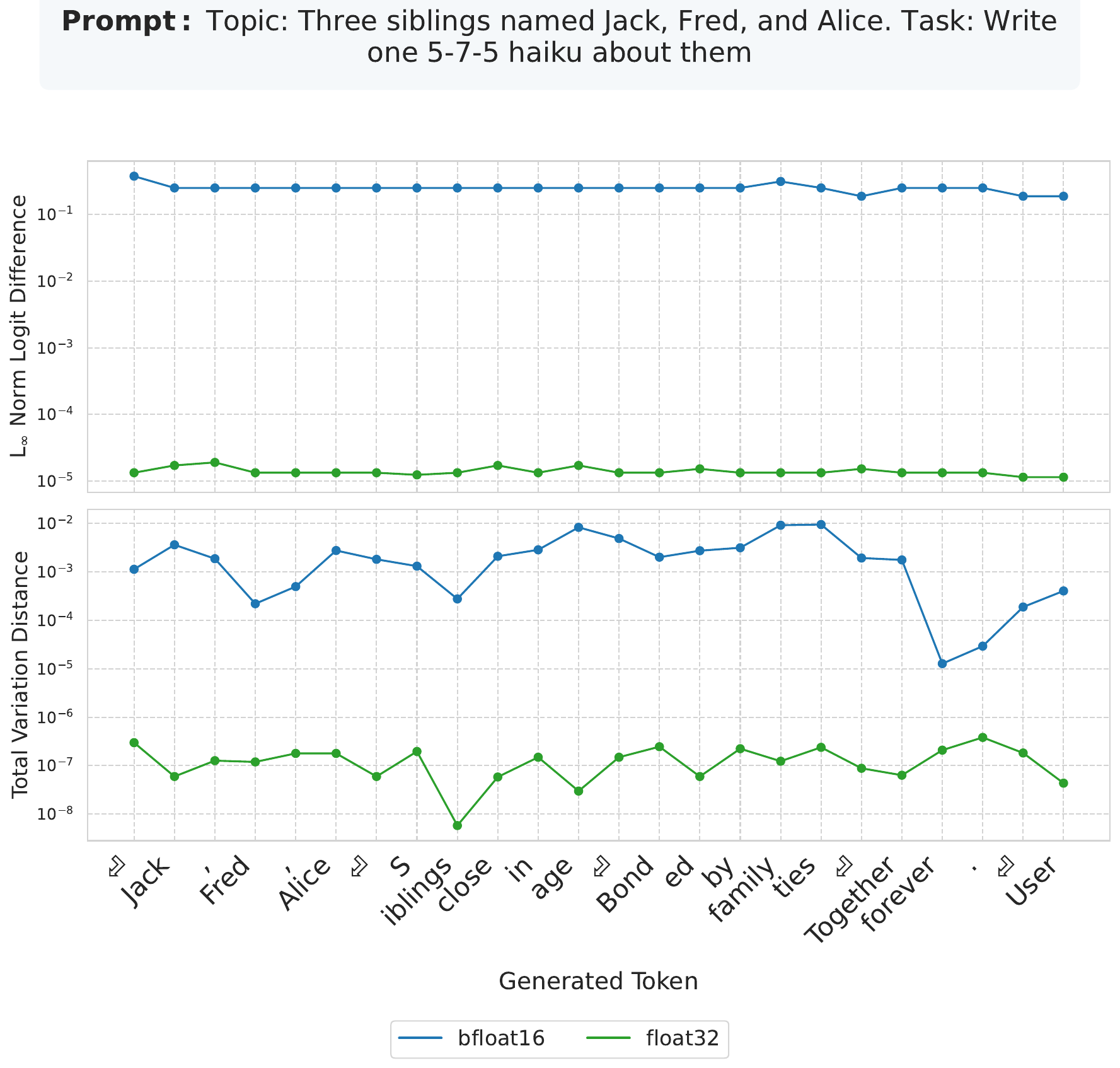}
        \caption{Sibling Haiku}
    \end{subfigure}
    
    \vspace{1em}
    
    \begin{subfigure}{0.48\textwidth}
        \centering
        \includegraphics[width=\linewidth]{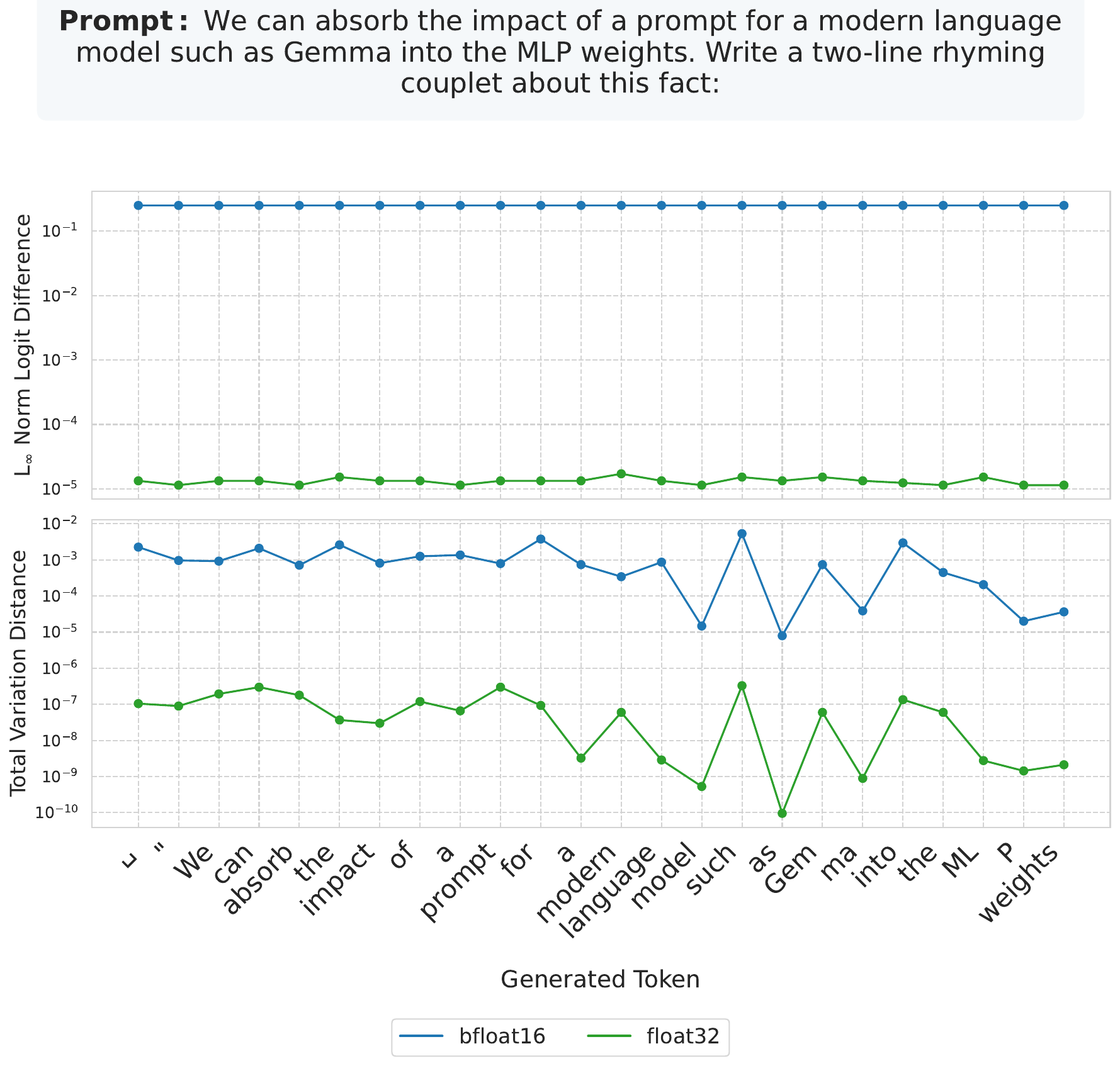}
        \caption{MLP Weight Rhyming Couplet}
    \end{subfigure}
    \caption{Per-token generation metrics for Falcon-7B across 5 distinct prompts. Top panels show $L_\infty$ logit divergence; bottom panels show TVD for each generation step. Red markers denote rare token mismatches.}
    \label{fig:falcon_generation}
\end{figure}
\subsection{Ablation: Starting Layer Depth}
\label{app:starting_layer}

As noted in \cref{sec:multilayer}, it is theoretically possible to absorb context using only a subset of the final layers. \cref{fig:starting_layer_norms} and \cref{fig:starting_layer_metrics} investigate the numerical stability of this approach.

We observe that updating only the final layer presents numerical issues. However, provided there are a few layers available to make adjustments, the impact on accuracy is minimal. \cref{fig:starting_layer_metrics} compares the naive update against the numerically stable update (via RMSNorm inversion) when varying the starting layer. We find that updating the last layer is consistently an issue. We also investigate the norms of the updates, finding that while the matrix norm update decreases as we update fewer layers, the vector norm increases (under the stable regime) and is multiple orders of magnitude larger. As such the smallest norm update results from starting at earlier layers.

\begin{figure}[h!]
    \centering
    \begin{subfigure}[b]{\linewidth}
        \centering
        \includegraphics[width=\linewidth]{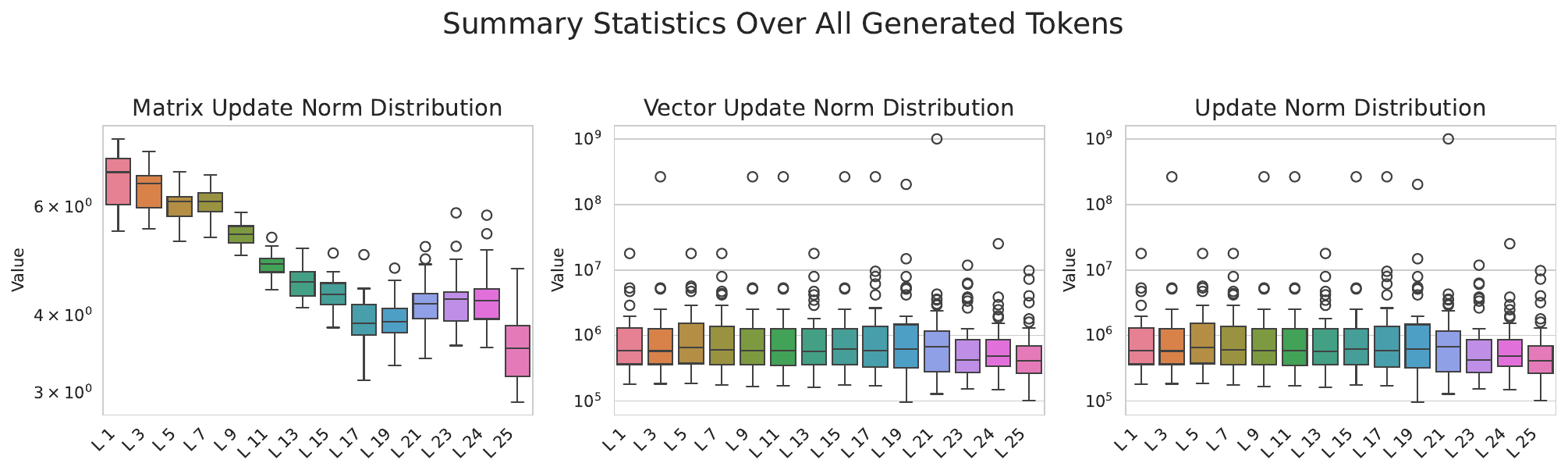}
        \caption{\texttt{float32} / Naive}
        \label{fig:norm_fp32}
    \end{subfigure}
    
    \begin{subfigure}[b]{\linewidth}
        \centering
        \includegraphics[width=\linewidth]{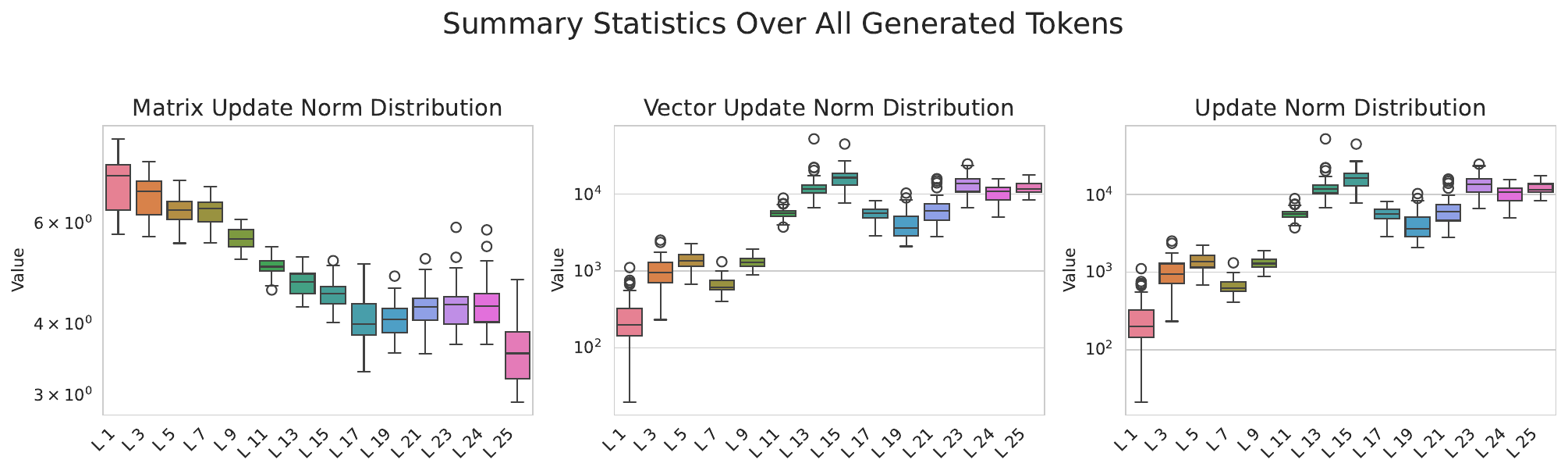}
        \caption{\texttt{float32} / Stable}
        \label{fig:norm_fp32_stable}
    \end{subfigure}
    
    
    \begin{subfigure}[b]{\linewidth}
        \centering
        \includegraphics[width=\linewidth]{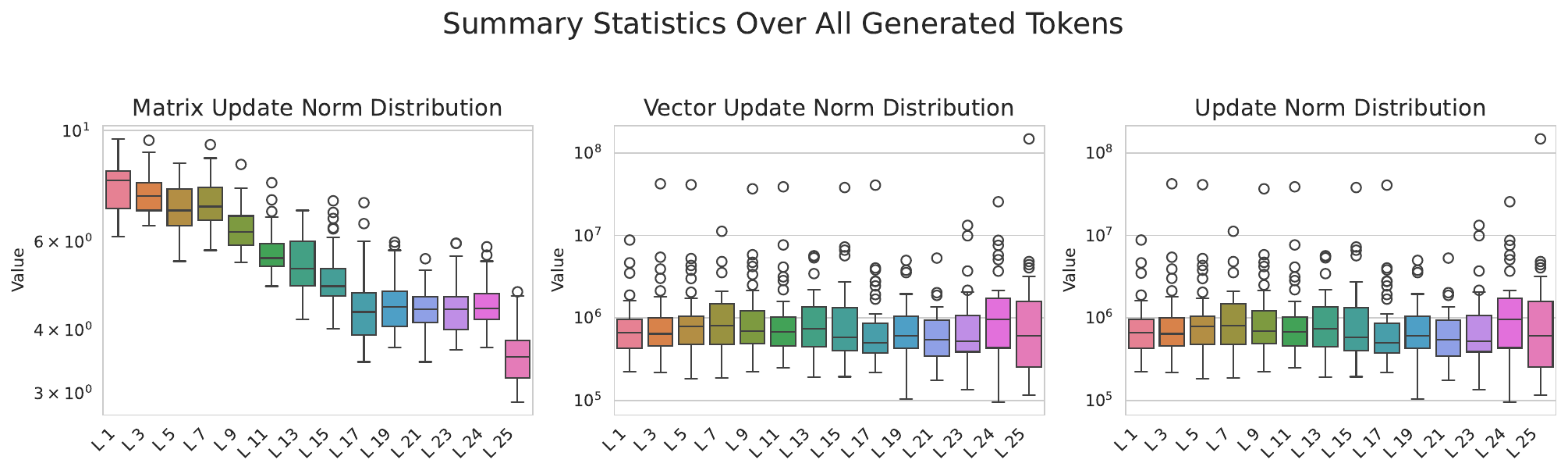}
        \caption{\texttt{bfloat16} / Naive}
        \label{fig:norm_bf16}
    \end{subfigure}
    
    \vspace{0.5em}

    \begin{subfigure}[b]{\linewidth}
        \centering
        \includegraphics[width=\linewidth]{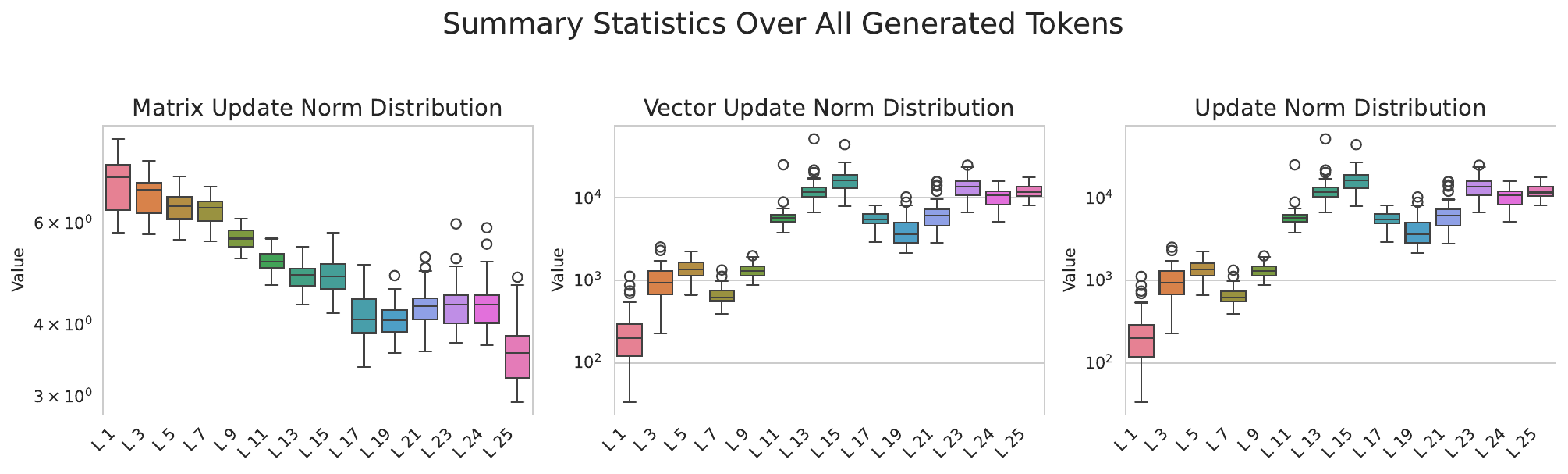}
        \caption{\texttt{bfloat16} / Stable}
        \label{fig:norm_bf16_stable}
    \end{subfigure}

    \caption{Norms for different starting layers. The stable update produces a much smaller update which gets even smaller with an earlier starting layer. The original parameters have norm around $10^4$}
    \label{fig:starting_layer_norms}
\end{figure}

\begin{figure*}[t!]
    \centering
    \begin{subfigure}[b]{0.6\textwidth}
        \centering
        \includegraphics[width=\linewidth]{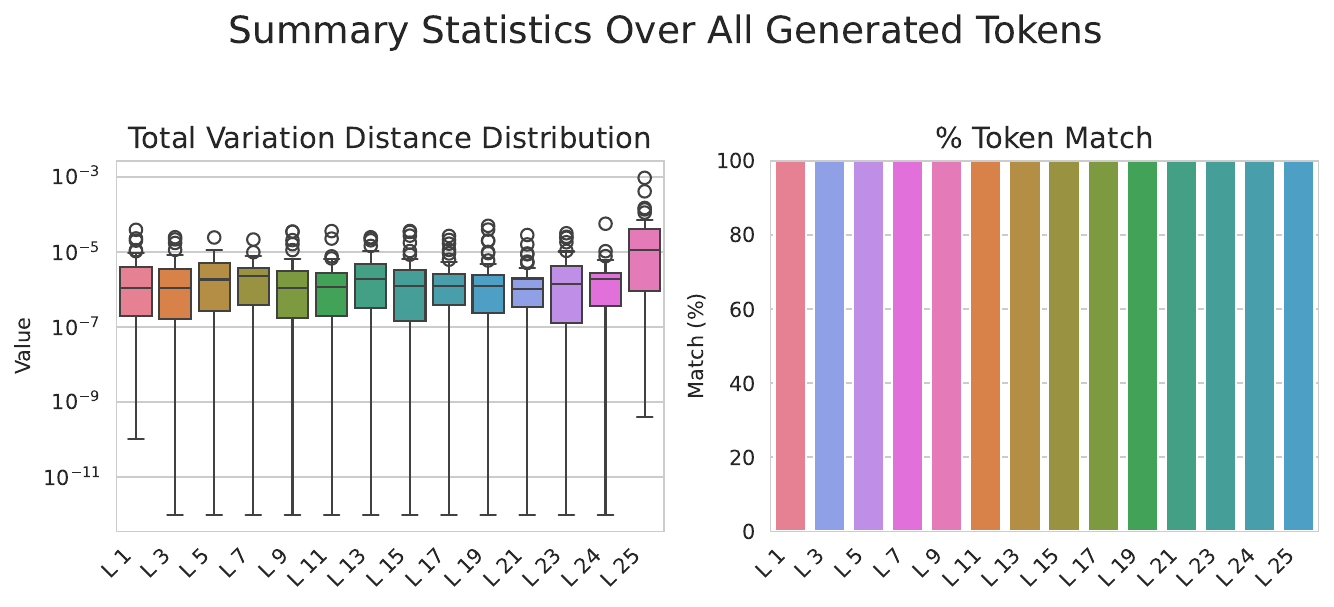}
        \caption{\texttt{float32} / Naive}
    \end{subfigure}
    \begin{subfigure}[b]{0.6\textwidth}
        \centering
        \includegraphics[width=\linewidth]{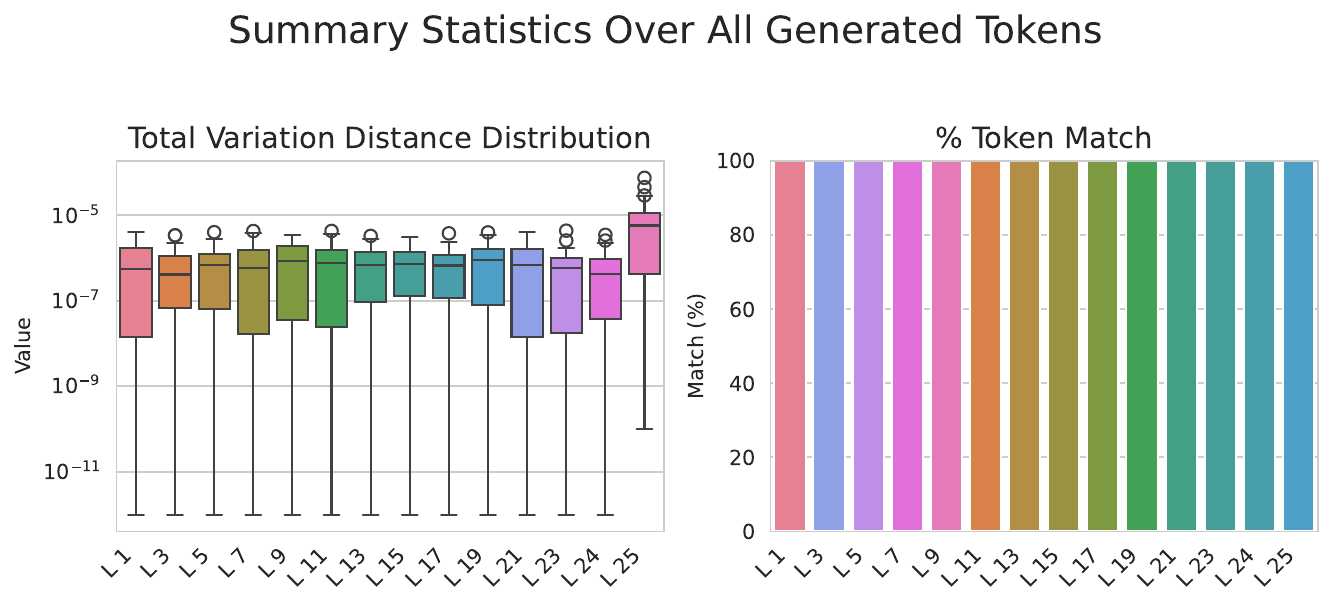}
        \caption{\texttt{float32} / Stable}
    \end{subfigure}
    \begin{subfigure}[b]{0.6\textwidth}
        \centering
        \includegraphics[width=\linewidth]{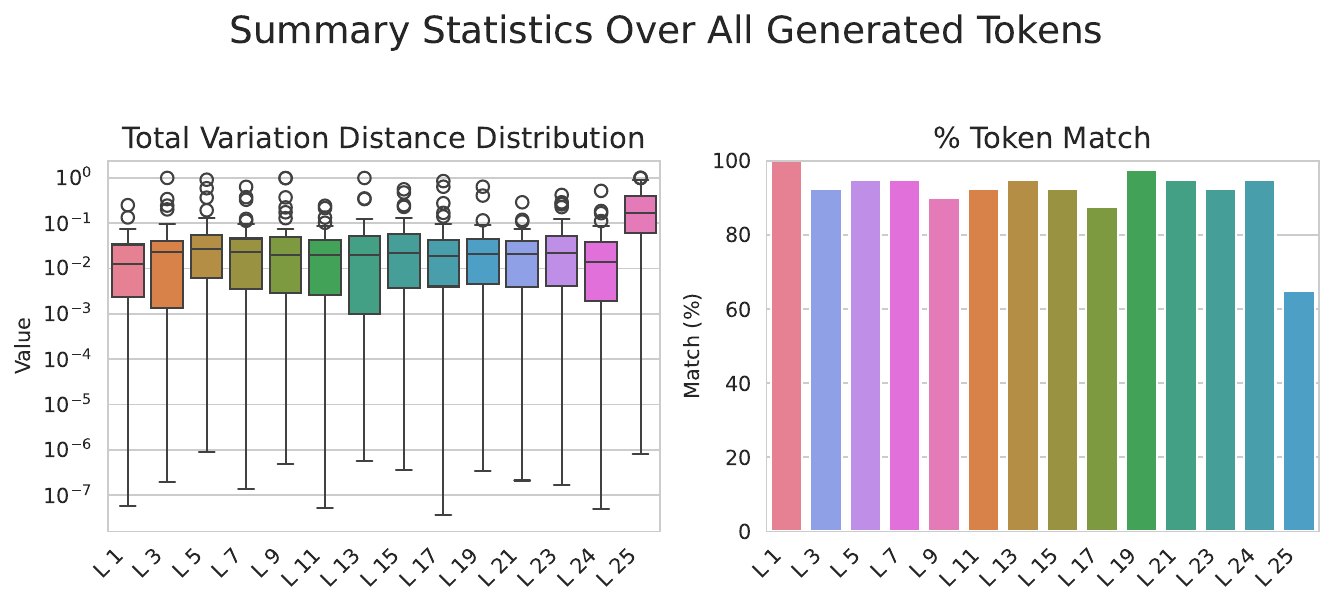}
        \caption{\texttt{bfloat16} / Naive}
    \end{subfigure}
    \begin{subfigure}[b]{0.6\textwidth}
        \centering
        \includegraphics[width=\linewidth]{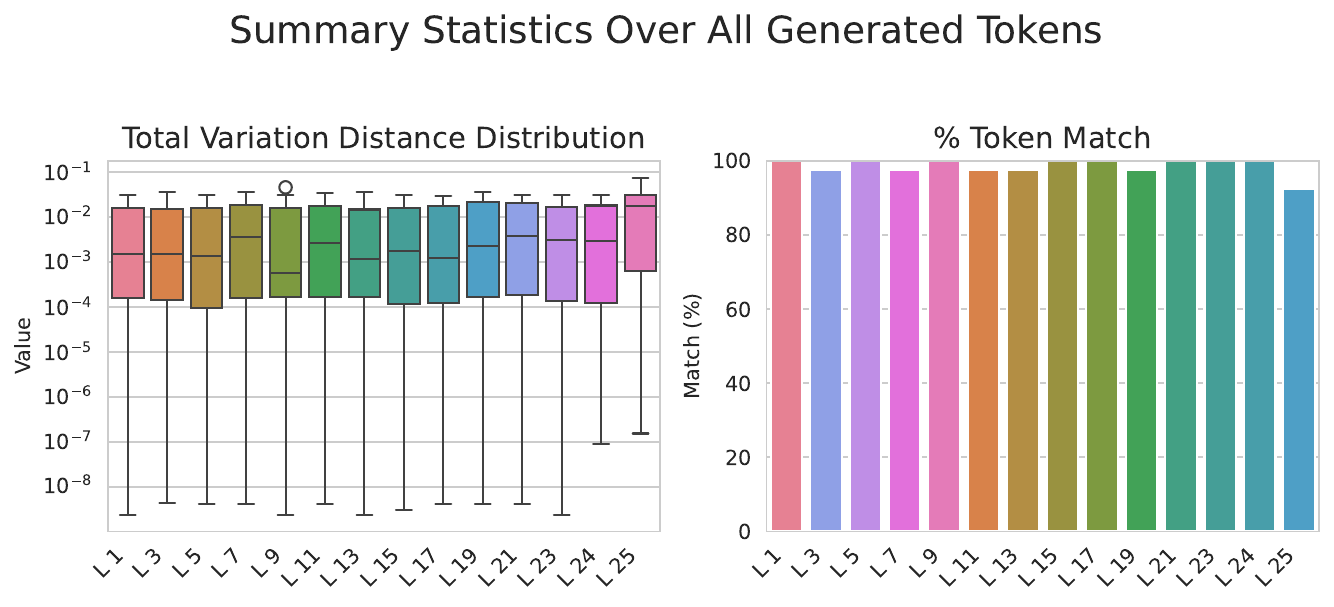}
        \caption{\texttt{bfloat16} / Stable}
    \end{subfigure}
    
    \caption{Impact of the starting layer index on generation fidelity. Updating only the last layer decreases accuracy, however even a few layers are enough for good performance.}
    \label{fig:starting_layer_metrics}
\end{figure*}

\subsection{Analysis of Scaling Update Variant}
\label{app:scaling_update}

Finally, we evaluate the robustness of the different scaling update strategies discussed in \cref{app:numerically-stable-update}. We compare the standard element-wise division and numerically stable update against a version which updates $W_\text{down}$ but only scales the RMSNorm parameter. Other choices for $\mathbf{h}_\text{target}$ are also possible such as $\mathbf{h}_\text{target} = \alpha \cdot \mathbf{g}\oslash \mathbf{m}$ scaled to $\text{RMS}(\mathbf{h}_\text{target})=C$.

\cref{fig:scaling_ablation} presents a three-part view:
\begin{itemize}
    \item \textbf{(a) Main Metrics:} Tracks $L_\infty$ norm, TVD, and token matching accuracy across \texttt{bfloat16} and \texttt{float32}.
    \item \textbf{(b) Norm Analysis:} Displays the magnitude of the resulting $\Delta \mathbf{m}$ and $\Delta W$ vectors.
    \item \textbf{(c) Auxiliary Metrics:} Shows additional divergence measures.
\end{itemize}
The results demonstrate that the scaled update variant minimizes extreme values in $\Delta \mathbf{m}$, leading to better preservation of the token distribution in lower precision.

\begin{figure*}[p] 
    \centering
    \begin{subfigure}[b]{1.0\textwidth}
        \centering
        \includegraphics[width=0.95\linewidth]{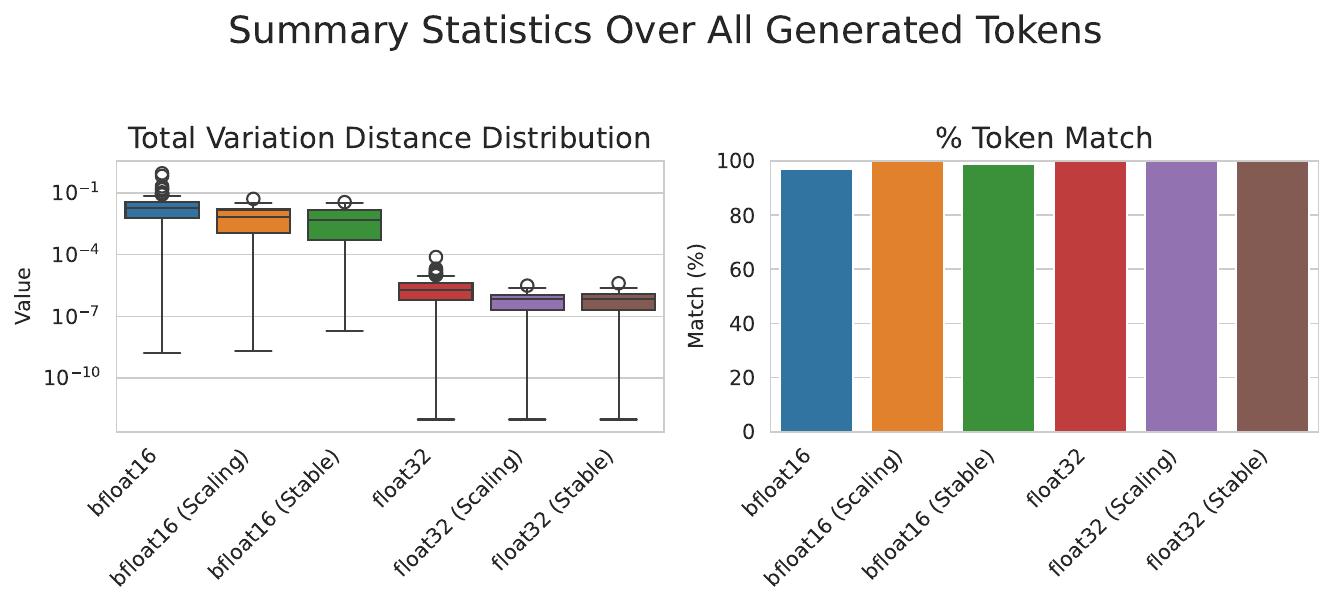}
        \caption{Main Performance Metrics (TVD, Token Match)}
        \label{fig:scale_main}
    \end{subfigure}
    
    \vspace{1em}
    
    \begin{subfigure}[b]{1.0\textwidth}
        \centering
        \includegraphics[width=\linewidth]{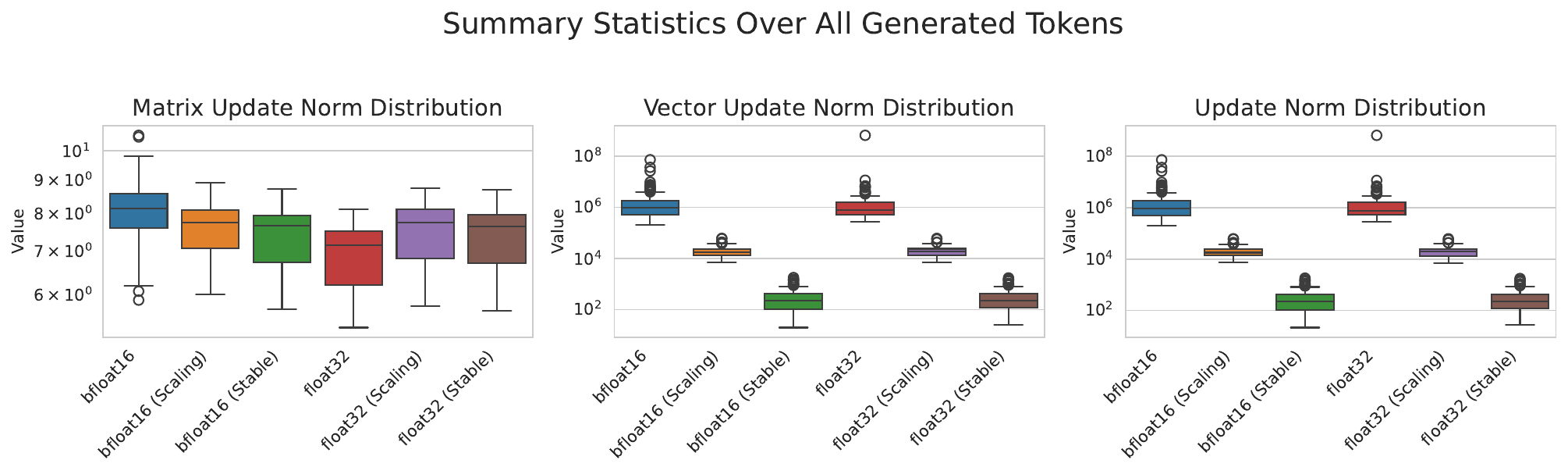}
        \caption{Parameter Norms}
        \label{fig:scale_norms}
    \end{subfigure}
    \hfill
    \begin{subfigure}[b]{1.0\textwidth}
        \centering
        \includegraphics[width=\linewidth]{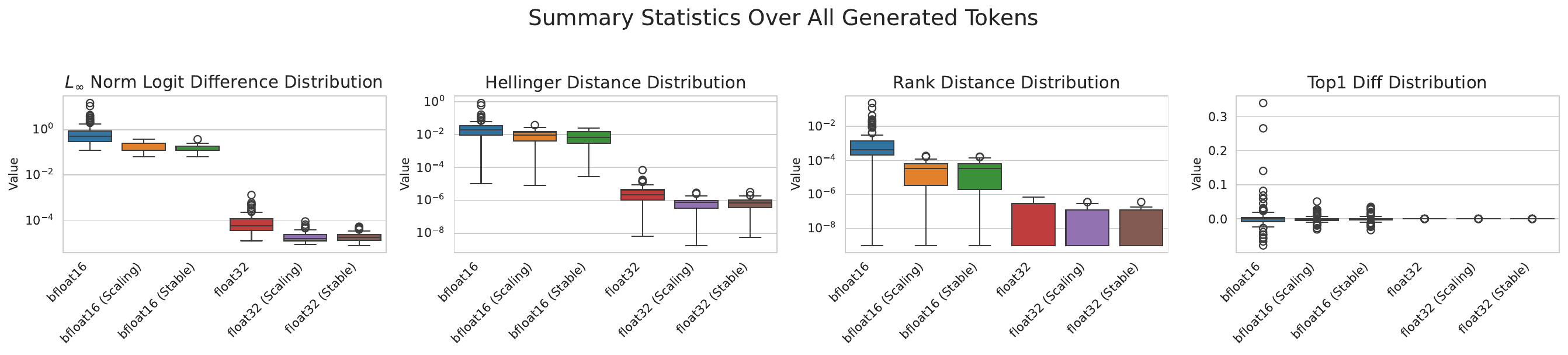}
        \caption{Auxiliary Divergence Metrics}
        \label{fig:scale_extra}
    \end{subfigure}
    
    \caption{Comprehensive evaluation of the scaling update variants compared to the naive and stable updates. The graphs include both \texttt{bfloat16} and \texttt{float32} to highlight numerical sensitivity. It can be seen that scaling and stable updates perform similarly, the scaling update is simpler while the stable update results in a much smaller norm update.}
    \label{fig:scaling_ablation}
\end{figure*}

\section{Impossibility of Controllability for RMS Post-Norm Without a Trainable Scaling Vector}
\label{app:rms_impossibility}
In this section, we demonstrate that output controllability (\cref{dfn:output-controllability}) requires specific architectural flexibilities. While \cref{thm:unified_residual} establishes that an exact implicit update exists for most modern models, this relies on the outer function $g$ being output-controllable. If an architecture lacks a trainable parameter at its output boundary, this property can fail.

\begin{theorem}[Impossibility of Fixed Post-Normalization] \label{thm:normalization_impossibility}
For a post-normalization residual block of the form $T(C,\mathbf{x}) = A(C,\mathbf{x}) + g(f(A(C,\mathbf{x})))$, an exact context-equivalent update cannot be guaranteed to exist if the outer function $g$ is an RMSNorm operation with a fixed (non-trainable) scaling vector $\mathbf{m}$.
\end{theorem}

\begin{proof}
Let the original transformer block be evaluated with full context $C$, and the updated block be evaluated with the reduced context $C \setminus Y$. Following the logic of \cref{thm:unified_residual}, for the final outputs to be perfectly identical, the MLP branch must exactly absorb the contextual difference vector from the attention branch, $\Delta A_\mathbf{x}(Y) = A(C,\mathbf{x}) - A(C \setminus Y, \mathbf{x})$. 

Mathematically, there must exist an updated internal activation $\mathbf{z}'$ such that:
\begin{equation}
    g(\mathbf{z}') - g(\mathbf{z}) = \Delta A_\mathbf{x}(Y)
\end{equation}
where $\mathbf{z}$ is the original internal activation and $g(\mathbf{z}) = \text{RMSNorm}(\mathbf{z})$.
Note that $\mathbf{z}'$ may be produced by any update to the parameters of $f$ (and to the preceding $W_{\text{gate}}, W_{\text{up}}$ matrices); the bound on $\|g(\mathbf{z}') - g(\mathbf{z})\|_2$ derived below holds uniformly over the choice of $\mathbf{z}'$, so no update upstream of $g$ can rescue the equality.
By definition, the RMSNorm of a vector $\mathbf{z} \in \mathbb{R}^d$ with a fixed scale vector $\mathbf{m}$ is:
\begin{equation}
    g(\mathbf{z}) = \mathbf{m} \odot \frac{\mathbf{z}}{\text{RMS}(\mathbf{z})} = \mathbf{m} \odot \left( \sqrt{d} \frac{\mathbf{z}}{\|\mathbf{z}\|_2} \right)
\end{equation}
Because the normalized vector $\frac{\mathbf{z}}{\|\mathbf{z}\|_2}$ lies on the unit hypersphere, the $L_2$ norm of the output of $g$ is strictly bounded by the maximum element of $\mathbf{m}$ ($\|\mathbf{m}\|_\infty = \max_i |m_i|$):
\begin{equation}
    \|g(\mathbf{z})\|_2 \le \sqrt{d} \|\mathbf{m}\|_\infty
\end{equation}
By the triangle inequality, the maximum possible change the MLP branch can produce is strictly bounded by the diameter of this output space:
\begin{equation}
    \|g(\mathbf{z}') - g(\mathbf{z})\|_2 \le \|g(\mathbf{z}')\|_2 + \|g(\mathbf{z})\|_2 \le 2\sqrt{d} \|\mathbf{m}\|_\infty
\end{equation}

This establishes a firm geometric upper bound on the corrective capacity of the MLP branch. The norm $\|\Delta A_x(Y)\|_2$, on the other hand, has no such architectural bound: it is produced by the attention sub-layer, whose output magnitude is governed by its own (independent) projection and scale parameters. Concretely, scaling the previous block's output projection by $\alpha$ scales $\|A(C,x)\|_2$ and hence $\|\Delta A_x(Y)\|_2$ by a factor of $\alpha$, while leaving the bound $2\sqrt{d}\|\mathbf{m}\|_\infty$ on $g$'s image unchanged. Choosing $\alpha$ large enough yields $\|\Delta A_x(Y)\|_2 > 2\sqrt{d}\|\mathbf{m}\|_\infty$.

Consequently, the outer function $g$ is not output-controllable, and exact single-token equivalence is mathematically impossible unless $\mathbf{m}$ can be modified as a trainable parameter (\cref{lem:output_elem_multiply}).
\end{proof}

\section{Deriving the Implicit Update as a Gradient Step}
\label{app:loss_derivation}
As noted in \cref{sec:general}, our context-equivalent updates can be viewed as gradient descent steps on a specific loss objective. Building on the trace-loss formulation from \citet{dherin2025learning}, we define an implicit objective measuring the discrepancy between the uncontextualized state and the contextualized target. 

To demonstrate this, we define a step-wise context accumulation from $i=0$ (no context) to $i=n$ (full context $C$). At each context step $i \to i+1$, the model's parameters $\boldsymbol{\Theta}$ take a gradient descent step on a global meta-loss $\mathcal{L}_i(\boldsymbol{\Theta})$:
\begin{equation}
    \boldsymbol{\Theta}_{i+1} = \boldsymbol{\Theta}_i - H \nabla_{\boldsymbol{\Theta}} \mathcal{L}_i(\boldsymbol{\Theta}_i)
\end{equation}
where $H$ represents a parameter-specific learning rate scaling.

\subsection{Notation and Setup}
For a given layer (dropping the layer index $k$ for readability) and context step $i \in \{0, \dots, n\}$:
\begin{itemize}[noitemsep,topsep=0pt]
    \item $\mathbf{v}_i$: The pre-norm residual vector at step $i$. ($\mathbf{v}_0 \equiv \mathbf{v}$ for no context; $\mathbf{v}_n \equiv \mathbf{v}_C$ for full context).
    \item $\mathbf{z}_i = N_{\text{RMS}}(\mathbf{v}_i)$: The post-norm input vector.
    \item $W_{\text{gate}, 0}, W_{\text{up}, 0}, W_{\text{down}, 0}$: The frozen, pre-trained initial weights.
    \item $\mathbf{h}_{\text{mlp}, C}$: The target internal MLP activation (after the activation function) calculated using the full context $n$.
\end{itemize}
To ensure the gradient descent step ($-\nabla \mathcal{L}$) yields our additive updates, we utilize a trace inner product $\langle \Delta, W \rangle = \text{Tr}(\Delta^\top W)$. If we define a pseudo-loss $\mathcal{L}(W) = -\text{Tr}(\Delta^\top W)$, its gradient is precisely $-\Delta$, yielding an update step of $+\Delta$.

\subsection{Case 1: Standard Outer Weight Matrix (e.g., Llama-style)}
For an architecture utilizing a standard outer weight matrix without a trainable RMSNorm scale vector mapping directly back to the residual stream, we define the global meta-loss at step $i$ as:
\begin{equation}
    \mathcal{L}_{\text{Llama}, i}(\boldsymbol{\Theta}) = -\text{Tr}\big( \Delta_{\text{gate}, i}^\top W_{\text{gate}} \big) - \text{Tr}\big( \Delta_{\text{up}, i}^\top W_{\text{up}} \big) - \text{Tr}\big( \Delta_{\text{down}, i}^\top W_{\text{down}} \big)
\end{equation}
The target shift matrices ($\Delta$) are defined to absorb the vector differences at each step. For the input matrices (matching Input Controllability, \cref{lem:input_controllability_norm}):
\begin{align}
    \Delta_{\text{gate}, i} &= W_{\text{gate}, 0} \big(\mathbf{z}_{i+1} - \mathbf{z}_i\big) \mathbf{z}_0^\top \\
    \Delta_{\text{up}, i} &= W_{\text{up}, 0} \big(\mathbf{z}_{i+1} - \mathbf{z}_i\big) \mathbf{z}_0^\top
\end{align}
For the output matrix (matching Output Controllability, \cref{lem:output_weight}), the shift absorbs the difference in the pre-norm residual space:
\begin{equation}
    \Delta_{\text{down}, i} = \big(\mathbf{v}_{i+1} - \mathbf{v}_i\big) \mathbf{h}_{\text{mlp}, C}^\top
\end{equation}
\paragraph{The Gradient Descent Step.} The learning rates $H$ are scalar values defined by the inverse squared norms of the respective input vectors: $\eta_{\text{in}} = 1 / \|\mathbf{z}_0\|^2$ and $\eta_{\text{out}} = 1 / \|\mathbf{h}_{\text{mlp}, C}\|^2$. The update for the gate matrix is:
\begin{equation}
    W_{\text{gate}, i+1} = W_{\text{gate}, i} - \eta_{\text{in}} \nabla_{W_{\text{gate}}} \mathcal{L}_{\text{Llama}, i} = W_{\text{gate}, i} + \eta_{\text{in}} \Delta_{\text{gate}, i}
\end{equation}
By summing these gradient steps from $i=0$ to $n-1$, the intermediate terms telescope perfectly:
\begin{equation}
    \sum_{i=0}^{n-1} (\mathbf{z}_{i+1} - \mathbf{z}_i) = \mathbf{z}_n - \mathbf{z}_0 = \mathbf{z}_C - \mathbf{z}
\end{equation}
This perfectly yields the cumulative single-step update derived in the main text: $\Delta W_{\text{gate}} = \frac{W_{\text{gate}, 0}(\mathbf{z}_C - \mathbf{z})\mathbf{z}_0^\top}{\|\mathbf{z}_0\|^2}$.

\subsection{Case 2: Numerically Stable Update (e.g., Gemma-style)}
For architectures where we utilize the RMSNorm Inversion derived in \cref{app:numerically-stable-update}, $W_{\text{down}}$ targets an optimal pre-norm vector $\mathbf{h}_{\text{target}}$, and the RMSNorm scale vector $\mathbf{m}$ absorbs the remaining error $\mathbf{r}$. 
Let $\mathbf{h}_{\text{gated}, C}$ be the input to the down projection, and $\mathbf{h}'_{\text{norm}}$ be the normalized output of the updated $W_{\text{down}}$ layer. The meta-loss is updated to accommodate the vector derivative for $\mathbf{m}$:
\begin{equation}
    \mathcal{L}_{\text{Gemma}, i}(\boldsymbol{\Theta}) = -\text{Tr}\big( \Delta_{\text{gate}, i}^\top W_{\text{gate}} \big) - \text{Tr}\big( \Delta_{\text{up}, i}^\top W_{\text{up}} \big) - \text{Tr}\big( \Delta_{\text{down}, i}^\top W_{\text{down}} \big) - \Delta_{\mathbf{m}, i}^\top \big( \mathbf{m} \odot \mathbf{h}'_{\text{norm}} \big)
\end{equation}
The input matrices ($\Delta_{\text{gate}}$ and $\Delta_{\text{up}}$) remain identical to the previous case. The output shifts now target the inverted goals:
\begin{align}
    \Delta_{\text{down}, i} &= \big(\mathbf{h}_{\text{target}, i+1} - \mathbf{h}_{\text{target}, i}\big) \mathbf{h}_{\text{gated}, C}^\top \\
    \Delta_{\mathbf{m}, i} &= \mathbf{r}_{i+1} - \mathbf{r}_i
\end{align}
\paragraph{The Gradient Descent Step.} The matrices update using scalar learning rates as before (with $\eta_{\text{down}} = 1 / \|\mathbf{h}_{\text{gated}, C}\|^2$). However, the scale vector $\mathbf{m}$ requires an \textbf{element-wise learning rate vector} to invert the Hadamard product:
\begin{equation}
    \vec{\eta}_{\mathbf{m}} = \mathbf{1} \oslash \big(\mathbf{h}'_{\text{norm}} \odot \mathbf{h}'_{\text{norm}}\big)
\end{equation}
Here $h'_{\text{norm}}$ is held fixed at its terminal value $\text{Norm}(h_{\text{target},n})$ across all steps $i$; equivalently, we choose the element-wise learning rate $\vec{\eta}_m = 1 \oslash (h'_{\text{norm}} \odot h'_{\text{norm}})$ as a fixed preconditioner independent of $i$. This is consistent with the algorithm's construction, where the $W_{\text{down}}$ updates by design land on $h_{\text{target},i}$ at each step, decoupling the $m$ subproblem from the running $W_{\text{down}}$ state. Without this choice, the element-wise division would not commute with the summation and the telescoping would fail.

Noting that $\nabla_{\mathbf{m}} [ - \Delta_{\mathbf{m}}^\top (\mathbf{m} \odot \mathbf{h}'_{\text{norm}}) ] = - \Delta_{\mathbf{m}} \odot \mathbf{h}'_{\text{norm}}$, the gradient descent step for $\mathbf{m}$ is an element-wise coordinate descent:
\begin{align}
    \mathbf{m}_{i+1} &= \mathbf{m}_i - \vec{\eta}_{\mathbf{m}} \odot \nabla_{\mathbf{m}} \mathcal{L}_{\text{Gemma}, i} \\
    &= \mathbf{m}_i - \vec{\eta}_{\mathbf{m}} \odot \big( - \Delta_{\mathbf{m}, i} \odot \mathbf{h}'_{\text{norm}} \big) \\
    &= \mathbf{m}_i + \big( \mathbf{r}_{i+1} - \mathbf{r}_i \big) \oslash \mathbf{h}'_{\text{norm}}
\end{align}
Summing this from $i=0$ to $n-1$ causes the remainders to telescope: $\sum (\mathbf{r}_{i+1} - \mathbf{r}_i) = \mathbf{r}_n - \mathbf{r}_0$. Because the initial remainder error $\mathbf{r}_0 = \mathbf{0}$, this globally collapses into our derived stable update formula: $\Delta \mathbf{m} = \mathbf{r}_n \oslash \mathbf{h}'_{\text{norm}}$.

\end{document}